\PassOptionsToPackage{numbers,sort&compress}{natbib} % Comment for named citations
\documentclass{article}

\usepackage[main,preprint]{neurips_2026}
\usepackage[american]{babel}
\usepackage{mathtools} % amsmath with fixes and additions
\usepackage{booktabs} % commands to create good-looking tables
\usepackage{tikz} % nice language for creating drawings and diagrams
\usepackage{amsmath}
\usepackage[colorinlistoftodos]{todonotes}
\usepackage{amssymb}
\usepackage{xifthen}
\usepackage{subcaption}
\usepackage{graphicx}
\graphicspath{{figures/}}
\usepackage{wrapfig}
\usepackage{textcomp}
\usepackage{yhmath}
\usepackage[ruled,vlined]{algorithm2e}
\usepackage{bm}
\usepackage{cancel}
\usepackage{xcolor}
\usepackage{pifont}
\usepackage{textcase}
\usepackage{siunitx}
\usepackage{array}
\usepackage{multirow}
\usepackage{gensymb}
\usepackage{tabularx}
\usepackage{extarrows}
\usepackage{colortbl}
\usepackage{pgfplots}
\usepackage{pgfplotstable}
\usepackage{adjustbox}
\usepackage{doi}

\usepackage[colorinlistoftodos]{todonotes}
\usepackage[utf8]{inputenc} % allow utf-8 input
\usepackage[T1]{fontenc}    % use 8-bit T1 fonts
\usepackage{csquotes}
\usepackage{hyperref}       % hyperlinks
\usepackage{url}            % simple URL typesetting
\usepackage{amsfonts}       % blackboard math symbols
\usepackage{amsthm}         % theorem environments
\newtheorem{definition}{Definition}[section]
\newtheorem{proposition}[definition]{Proposition}

\newtheorem{corollary}[definition]{Corollary}
\newtheorem{lemma}[definition]{Lemma}
\newtheorem*{remark}{Remark}
\newtheorem{assumption}{Assumption}[section]
\usepackage{nicefrac}       % compact symbols for 1/2, etc.
\usepackage{microtype}      % microtypography
\usepackage{authblk}

\pgfplotsset{compat=newest}
\usetikzlibrary{fadings}
\usetikzlibrary{bayesnet}
\usetikzlibrary{patterns}
\usetikzlibrary{shadows.blur}
\usetikzlibrary{calc}
\usetikzlibrary{shapes, 3d, backgrounds, fit, arrows.meta, positioning, shapes.geometric}
\usepackage{ifthen}
\usepackage{style/custom_macros}

\allowdisplaybreaks

\title{Bayesian Optimization with Fisher Information Geometry: Gradient Bounds and Trust-Region Methods}

\author{%
  Saksham Kiroriwal$^{1}$, Julius Pfrommer$^{1}$, and J\"{u}rgen Beyerer$^{1,2}$ \\
  $^{1}$Cognitive Industrial Systems, Fraunhofer IOSB, Karlsruhe, Germany \\
  $^{2}$Karlsruhe Institute of Technology (KIT), Karlsruhe, Germany \\
  \texttt{\{saksham.kiroriwal, julius.pfrommer, juergen.beyerer\}@iosb.fraunhofer.de}
}

\begin{document}

\maketitle

\begin{abstract}
  %Probabilistic models induce a geometry on the input space using the Fisher Information Metric (FIM). We analyze Bayesian Optimization (BO) from the perspective of information geometry. Using the pullback metric tensor for the induced geometry, we construct a novel bound for the gradient of any reparameterizable acquisition function.
  We study Bayesian optimization (BO) through the lens of information geometry. Pulling back the Fisher information metric through the surrogate posterior map yields a local sensitivity tensor on the input space, which leads to an upper bound on the gradient of reparameterizable acquisition functions. This view explains vanishing-gradient behavior in high-dimensional BO and provides a common interpretation of heuristics such as RAASP and dimension-scaled lengthscales. Building on this analysis, we propose FITR, a trust-region-based BO method that replaces lengthscale-based scaling by local pullback-Fisher weights. FITR is not restricted to GP kernels with explicit lengthscales. On GP benchmarks with an SE kernel, experiments show competitive performance using FITR. The proposed method also easily generalizes to non-isotropic surrogates although the gains are more task-dependent in that setting.
\end{abstract}

\section{Introduction}\label{sec:introduction}
\vspace{-1.5em}
\begin{figure}[h]
    \centering
    \begin{adjustbox}{max width=0.98\linewidth,max height=0.42\textheight,center}
        \includegraphics{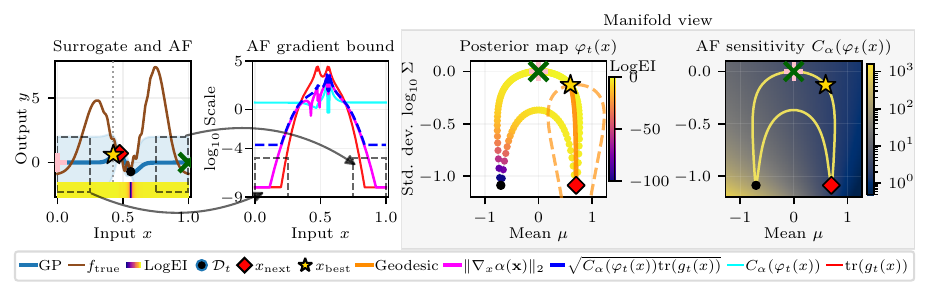}
    \end{adjustbox}
    \caption{First: Conventional BO view with a GP surrogate (SE kernel) and LogEI~\citep{ament2023unexpected} acquisition function (AF). Second: Visualization of Proposition~\ref{prop:universal_bound}, where the acquisition gradient is upper-bounded by the product of the trace of the pullback Fisher tensor and the sensitivity factor of the AF. Third: Induced geometry on the input space using the pullback Fisher tensor. Fourth: Sensitivity $\acqfsensitivity$ (Section~\ref{sec:manifold_view}) of the AF with respect to the predictive-distribution parameters on the statistical manifold. \textcolor{pink}{$\pmb{+}$} denotes $x=0$ and \textcolor{darkgreen}{$\pmb{\times}$} denotes $x=1$ in input space (the same markers are used to denote the respective points in the manifold view). Gray boxes indicate regions with no observations where the trace of the pullback Fisher tensor collapses.
    }
    \label{fig:induced_geometry}
\end{figure}

Bayesian Optimization (BO) is a popular framework for optimizing black-box functions with expensive evaluations~\citep{frazier2018tutorial}. It has proven to be effective in hyperparameter tuning~\citep{wu2019hyperparameter}, robotics policy search~\citep{jaquier2020bayesian}, and engineering design~\citep{lam2018advances}. As explained in Section~\ref{sec:preliminaries}, BO alternates between fitting a probabilistic surrogate $\surrogatemodel_{\boiteration}$ to the observed data $\observeddata_{\boiteration}$ and optimizing an acquisition function (AF) $\acqf(\cdot)$ to select the next input candidate $\cusvector{\inpvars}_{\boiteration+1}$. In practice, the inner AF optimization is often non-trivial. The AF is usually non-convex, depends on the surrogate through a predictive distribution, and becomes increasingly difficult to optimize as the input dimension $\dims{\inpvars}$ grows.

This issue is now well documented in the high-dimensional BO (HDBO) literature. Recent work shows that both surrogate training and AF optimization can suffer from vanishing gradients in large dimensions~\citep{hvarfner2024vanilla, xu2024standard, papenmeier2025understanding}. Proposed fixes include dimensionally scaled lengthscale priors~\citep{hvarfner2024vanilla, xu2024standard}, Random Axis Aligned Subspace Perturbations (RAASP) initializations~\citep{papenmeier2022increasing}, conformal mappings~\citep{doumont2025we}, and trust-region-based search~\citep{eriksson2019scalable}. A central question remains: what common mechanism makes AF optimization collapse, and how much of that mechanism is tied to the surrogate model?

Inspired by the work of~\citet{herrmann2024position}, we analyze BO from a different perspective. As shown in Figure~\ref{fig:induced_geometry}, instead of looking at the AF only as a function over $\inpvarsset$, we utilize the chain rule and information geometry. The surrogate $\surrogatemodel_{\boiteration}$ is a map from input space $\inpvarsset$ to the statistical manifold $\manifold$ of predictive distributions. These predictive distribution parameters, like mean and variance, are the parameters that the AF really depends on. Information geometry~\citep{amari2016information, lee2018introduction} becomes a natural way to study how the surrogate induces geometry on the input space through the pullback Fisher tensor. This induced geometry tells us how strongly local perturbations in $\cusvector{\inpvars}$ change the predictive distribution.

This view separates two effects that are usually entangled in AF optimization. The acquisition design determines how sensitive the utility is to changes in the predictive distribution. The surrogate-induced geometry determines whether moving in input space produces such changes at all. Section~\ref{sec:manifold_view} formalizes this separation through an acquisition-gradient bound. This perspective then becomes a common analytical tool for understanding HDBO:
\begin{itemize}
    \item In Section~\ref{sec:manifold_view}, we analyze the pullback geometry induced by the surrogate posterior map and derive a bound for the AF gradient norm $\|\nabla_{\cusvector{\inpvars}}\acqf\|_2$ that separates acquisition-side sensitivity from surrogate-induced geometry.
    \item We use this factorization to analyze vanishing-gradient behavior in HDBO under a common framework. This yields a unified interpretation of dimensionally scaled priors~\citep{hvarfner2024vanilla, xu2024standard} and RAASP~\citep{papenmeier2022increasing, papenmeier2025understanding}.
    \item In Section~\ref{sec:fitr}, we propose the Fisher-Information Trust Region (FITR), which replaces kernel-lengthscale scaling by local pullback-Fisher weights. FITR applies to differentiable probabilistic surrogates that satisfy the stated regularity conditions and is not restricted to GP kernels with explicit lengthscales. Section~\ref{sec:experiments} shows that FITR improves on TuRBO-REI on several benchmarks and remains competitive with strong baselines on the remainder.
\end{itemize}

\section{Preliminaries}
\label{sec:preliminaries}
% This section introduces the three building blocks that form the foundation of the results in Section~\ref{sec:manifold_view}: surrogate models as smooth maps into a statistical manifold, acquisition functions, and the Riemannian geometry that measures distances on that manifold. 
The aim of BO is to maximize a black-box function $\processfunction_{\mathrm{true}}$ over a feasible set $\inpvarsset \subset \mathbb{R}^{\dims{\inpvars}}$. At each BO iteration $\boiteration$, we assume that we have observed data $\observeddata_{\boiteration}=\{(\cusvector{\inpvars}_{\numobservation}, \obsouts_{\numobservation})\}_{\numobservation=1}^{\Numobservation}$, where input $\cusvector{\inpvars}_{\numobservation} \in \inpvarsset$ and output $\obsouts_{\numobservation} = \processfunction_{\mathrm{true}}(\cusvector{\inpvars}_{\numobservation}) + \noise_{\numobservation}$. We allow $\obsouts_{\numobservation}\in\mathbb{R}^{\multiobjective}$ to be multi-dimensional and assume $\noise_{\numobservation} \sim \mathcal{N}(0, \noisevar I_{\multiobjective})$, which reduces to $\mathcal{N}(0, \noisevar)$ in the scalar-output case. BO alternates between fitting a probabilistic surrogate $\surrogatemodel_{\boiteration}$ and maximizing an AF via $\arg \max_{\cusvector{\inpvars}\in \inpvarsset}\acqf(\cusvector{\inpvars};\observeddata_{\boiteration})$ to obtain the next input candidate $\cusvector{\inpvars}_{\boiteration+1}$~\citep{balandat2020botorch, garnett_bayesoptbook_2023}. We then evaluate $\obsouts_{\boiteration+1} = \processfunction_{\mathrm{true}}(\cusvector{\inpvars}_{\boiteration+1})$ and update the observed data $\observeddata_{\boiteration+1} = \observeddata_{\boiteration} \cup \{(\cusvector{\inpvars}_{\boiteration+1}, \obsouts_{\boiteration+1})\}$. We repeat until a stopping criterion is met.

\paragraph{Surrogate Models}
At each iteration $\boiteration$, the surrogate model $\surrogatemodel_{\boiteration}$ models the predictive distribution $\probability(\obsouts \vert \cusvector{\inpvars}, \observeddata_{\boiteration})=\probability(\obsouts; \cusvector[bm]{\manifoldparameters}_{\boiteration, \cusvector{\inpvars}})$, where $\cusvector[bm]{\manifoldparameters}_{\boiteration, \cusvector{\inpvars}}\in\manifold$ are the parameters of the predictive distribution~\citep{wilson2018maximizing}. The hyperparameters $\cusvector[bm]{\gphyperparam}_{\boiteration}$ of $\surrogatemodel_{\boiteration}$ specify the distribution parameters $\cusvector[bm]{\manifoldparameters}_{\boiteration, \cusvector{\inpvars}}\leftarrow\surrogatemodel_{\boiteration}(\cusvector{\inpvars};\observeddata_{\boiteration})$ and can be trained by maximizing the marginal log-likelihood~\cite{rasmussen2003gaussian} or another suitable objective~\cite{blei2017variational, moss2023inducing}.

As shown in the manifold view in Figure~\ref{fig:induced_geometry}, the surrogate model can also be interpreted as a smooth map from inputs to a family of probability distributions. Each input $\cusvector{x} \in \inpvarsset$ is mapped to a point $\cusvector[bm]{\manifoldparameters}_{\boiteration, \cusvector{\inpvars}}$ on the statistical manifold $\manifold$, the set of all distributions the surrogate can produce~\cite{amari2016information}. Input locations that appear far apart in Euclidean space may be close in the manifold view because the surrogate induces its own local geometry, as shown by the \textcolor{pink}{$\pmb{+}$} and \textcolor{darkgreen}{$\pmb{\times}$} markers in Figure~\ref{fig:induced_geometry}. This map is at least $C^1$ differentiable, a mild condition satisfied by many popular surrogate models such as Gaussian processes (GPs) with differentiable kernels~\cite{rasmussen2003gaussian} or Bayesian neural networks (BNNs) with smooth activations~\citep{li2023study}. The \emph{posterior map} $\manifoldmap_{\boiteration}$ induced by $\surrogatemodel_{\boiteration}$ can be defined as
\begin{equation}
    \label{eq:posterior_map}
    \manifoldmap_{\boiteration} : \inpvarsset \to \manifold, \quad \cusvector{\inpvars}
    \mapsto \cusvector[bm]{\manifoldparameters}_{\boiteration, \cusvector{\inpvars}},\text{ such that } \cusvector[bm]{\manifoldparameters}_{\boiteration, \cusvector{\inpvars}}\in\manifold,
\end{equation}
% where the Jacobian is $\jacobianmap_{\boiteration}(\cusvector{\inpvars}) = \partial\cusvector[bm]{\manifoldparameters}_{\cusvector{\inpvars},\boiteration}/\partial\cusvector{\inpvars} \in \mathbb{R}^{\dims{\manifoldparameters}\times\dims{\inpvars}}$. 
A common choice for the surrogate model is a Gaussian process (GP). For a scalar output $\obsouts\in\mathbb{R}$ at an input location $\cusvector{\inpvars}\in\inpvarsset$, $\surrogatemodel_{\boiteration}$ models $\cusvector[bm]{\manifoldparameters}_{\boiteration, \cusvector{\inpvars}}=(\distmean_{\boiteration, \cusvector{\inpvars}}, \distcov_{\boiteration, \cusvector{\inpvars}})$ and gives $\probability(\obsouts \vert \cusvector{\inpvars}, \observeddata_{\boiteration})=\mathcal{N}(\obsouts;\distmean_{\boiteration, \cusvector{\inpvars}}, \distcov_{\boiteration, \cusvector{\inpvars}})$ with $\manifold=\mathbb{R}\times\mathbb{R}_{+}$~\cite{rasmussen2003gaussian}. GPs are discussed in further detail in Appendix~\ref{apx:gaussian_process}.

\paragraph{Acquisition Function (AF)}
The AF $\acqf$ is the expected utility over the predictive posterior evaluated at a candidate location $\cusvector{\inpvars} \in \inpvarsset$ and can be defined as
% \begin{equation}
%     \label{eq:acqf_def}
%     \acqf(\cusvector{\inpvars}) = \mathbb{E}_{\obsouts\sim\probability(\obsouts \vert \cusvector{\inpvars}, \observeddata_{\boiteration})}\left[\acqfutility(\obsouts; \cusvector[bm]{\manifoldparameters}_{\boiteration, \cusvector{\inpvars}}, \cusvector[bm]{\acqfparams})\right],
% \end{equation}
$    \acqf(\cusvector{\inpvars}) = \mathbb{E}_{\obsouts}\left[\acqfutility(\obsouts; \cusvector[bm]{\manifoldparameters}_{\boiteration, \cusvector{\inpvars}}, \cusvector[bm]{\acqfparams})\right]$,
where $\acqfutility$ is the associated utility function for the AF and $\cusvector[bm]{\acqfparams}$ are respective acquisition parameters. The next query is $\cusvector{\inpvars}_{\boiteration+1} = \arg\max_{\cusvector{\inpvars}}\acqf(\cusvector{\inpvars};\observeddata_{\boiteration})$~\cite{mockus1998application, garnett_bayesoptbook_2023, wilson2018maximizing}.
Since $\probability(\obsouts \vert \cusvector{\inpvars}, \observeddata_{\boiteration})$ can be parameterized by $\cusvector[bm]{\manifoldparameters}_{\boiteration, \cusvector{\inpvars}}$, $\acqf$ depends on $\cusvector{\inpvars}$ only through $\cusvector[bm]{\manifoldparameters}_{\boiteration, \cusvector{\inpvars}}$. Using the reparameterization trick~\cite{wilson2017reparameterization}, we introduce a common notation to represent both closed-form and Monte Carlo AFs as
\begin{equation}
    \label{eq:acqf_expectation}
    \acqf(\cusvector{\inpvars}) = \mathbb{E}_{\cusvector[bm]{\MCbasesamples}}\!\left[\acqfdistfunctional\!\left(\cusvector[bm]{\manifoldparameters}_{\boiteration, \cusvector{\inpvars}},\cusvector[bm]{\MCbasesamples};\cusvector[bm]{\acqfparams}\right)\right],
\end{equation}
where $\cusvector[bm]{\MCbasesamples}$ is base noise drawn independently of $\cusvector{\inpvars}$~\cite{wilson2017reparameterization, wilson2018maximizing}. However for closed-form AFs, $\acqfdistfunctional$ is independent of $\cusvector[bm]{\MCbasesamples}$ and~\eqref{eq:acqf_expectation} can be trivially written as $\acqf(\cusvector{\inpvars})=\acqfdistfunctional(\cusvector[bm]{\manifoldparameters}_{\boiteration, \cusvector{\inpvars}};\cusvector[bm]{\acqfparams})$. Due to this, the closed-form AF and the $\cusvector[bm]{\MCbasesamples}$-dependent integrand used in the MC form can, in general, be different formulations of $\cusvector[bm]{\manifoldparameters}_{\boiteration, \cusvector{\inpvars}}$. Appendix~\ref{apx:acqf_table} discusses this in detail and lists both forms for different AFs.
% Following~\citet{wilson2017reparameterization}, EI using MC approximation is given by~\eqref{eq:ei_mc}.
% \begin{equation}
%     \label{eq:ei_mc}
%     \acqfdistfunctional_{\mathrm{EI}}(\cusvector[bm]{\manifoldparameters}_{\boiteration, \cusvector{\inpvars}},\, \cusvector[bm]{\MCbasesamples};\,\cusvector[bm]{\acqfparams})
%     = \max\!\left(\distmean_{\boiteration, \cusvector{\inpvars}} + \distcov_{\boiteration, \cusvector{\inpvars}}^{1/2}\cusvector[bm]{\MCbasesamples} - \obsouts^{\incumbentobs},\; 0\right).
% \end{equation}

% The gradient $\nabla_{\cusvector{\inpvars}}\acqf_{\mathrm{EI}}$ passes through $g$ and acts on $\cusvector[bm]{\manifoldparameters}_{\boiteration, \cusvector{\inpvars}}$, not on $\cusvector[bm]{\MCbasesamples}$ directly. For general exponential families, implicit reparameterization~\cite{balandat2020botorch} provides $g$.
% The analytic form~\eqref{eq:acqf_analytic} is the special case of~\eqref{eq:acqf_mc} where $\cusvector[bm]{\MCbasesamples}$ is trivial and $\acqfdistfunctional = \acqfdistfunctional$. Batch acquisitions (q-EI, q-UCB~\cite{balandat2020botorch}) and multi-objective acquisitions (qEHVI, qNEHVI~\cite{daulton2020differentiable}) all satisfy~\eqref{eq:acqf_mc} via their MC estimators. 
% In both cases, $\nabla_{\batchinpvars}\acqf$ is determined by the Jacobian of $\manifoldmap_{\boiteration}$; Section~\ref{sec:manifold_view} derives a universal bound on this gradient.

\paragraph{Fisher Information Matrix}
For a parametric distribution family $\probability(\obsouts; \cusvector[bm]{\manifoldparameters})$ with $\cusvector[bm]{\manifoldparameters} \in \manifold$, the \emph{Fisher information matrix} (FIM) $\fishermetric_\manifold(\cusvector[bm]{\manifoldparameters}) \in \mathbb{R}^{\dims{\manifoldparameters}\times\dims{\manifoldparameters}}$ is the expected outer product of $\nabla_{\cusvector[bm]{\manifoldparameters}} \log \probability(\obsouts; \cusvector[bm]{\manifoldparameters})$ as
\begin{equation}
    \label{eq:fim_def}
    \left[\fishermetric_\manifold(\cusvector[bm]{\manifoldparameters})\right]_{ij} =
    \mathbb{E}_{\probability(\obsouts;\,\cusvector[bm]{\manifoldparameters})}\!\left[
        \frac{\partial \log \probability(\obsouts;\cusvector[bm]{\manifoldparameters})}{\partial \manifoldparameters_i}\,
        \frac{\partial \log \probability(\obsouts;\cusvector[bm]{\manifoldparameters})}{\partial \manifoldparameters_j} \right].
\end{equation}
\citet{amari2000methods} showed that the FIM induces a Riemannian metric, known as \emph{Fisher--Rao metric}, on the statistical manifold $\manifold$. The Fisher--Rao metric explains the local geometry of the manifold~\cite{amari2016information, martens2020new}. It quantifies the sensitivity of the predictive distribution $\probability(\obsouts; \cusvector[bm]{\manifoldparameters})$ to changes in $\cusvector[bm]{\manifoldparameters}$.
% At input $\cusvector{\inpvars}$, the posterior map yields $\cusvector[bm]{\manifoldparameters}_{\boiteration, \cusvector{\inpvars}} = \manifoldmap_{\boiteration}(\cusvector{\inpvars}) \in \manifold$, so $\fishermetric_\manifold(\cusvector[bm]{\manifoldparameters}_{\boiteration, \cusvector{\inpvars}})$ defines the local geometry of the predictive posterior $\probability(\obsouts \vert \cusvector{\inpvars}, \observeddata_{\boiteration})$ at that point on $\manifold$.

\paragraph{Riemannian Geometry}
A \emph{Riemannian manifold} $(\mathcal{M}, \pullbackFIM)$ is a smooth manifold $\mathcal{M}$ equipped with a Riemannian metric $\pullbackFIM$, whose pointwise evaluation $\pullbackFIM_p$ defines a positive definite inner product on each tangent space $T_p\mathcal{M} \simeq \mathbb{R}^{d_p}$~\cite{lee2018introduction}. The geodesic distance between $p_1, p_2 \in \mathcal{M}$ is the minimum path length on the manifold that connects the two points and is given by
\begin{equation}
    \label{eq:riemannian_distance}
    d_{\pullbackFIM}(p_1, p_2) = \inf_{\gamma} \int_0^1 \sqrt{\pullbackFIM_{\gamma(t)}\!\left(\dot\gamma(t),\, \dot\gamma(t)\right)}\,\mathrm{d}t,
\end{equation}
where the infimum is over smooth curves $\gamma : [0,1] \to \mathcal{M}$ with $\gamma(0)=p_1$ and $\gamma(1)=p_2$~\cite{lee2018introduction}. The Fisher--Rao metric $\fishermetric_\manifold(\cusvector[bm]{\manifoldparameters})$ induces a Riemannian structure on the statistical manifold $\manifold$~\cite{amari2016information}. We define the pullback Fisher tensor through the $\manifoldmap_{\boiteration}$ and discuss its properties in Section~\ref{sec:manifold_view}.
%
% A pullback Fisher tensor also induces a geometry on the input space $\inpvarsset$ with metric $\pullbackFIM_{\boiteration,\cusvector{\inpvars}} := \pullbackFIM_{\boiteration}(\cusvector{\inpvars})$.
%
% The construction we use is the \emph{pullback metric}. For a smooth map $\psi : \mathcal{N} \to \riemannianmanifold$, the pullback metric on $\mathcal{N}$ is
% \begin{equation}
%     \label{eq:pullback_metric}
%     (\psi^* g)_p(u, v) = g_{\psi(p)}\!\left(\mathrm{d}\psi_p\, u,\, \mathrm{d}\psi_p\, v\right),
% \end{equation}
% where $\mathrm{d}\psi_p$ is the derivative of $\psi$ at $p$: distances on $\mathcal{N}$ are measured by how far $\psi$ moves them on $\riemannianmanifold$~\cite{tosi2014metrics}. In our setting, $\psi = \manifoldmap_{\boiteration}$ is the posterior map~\eqref{eq:posterior_map} with $\mathcal{N} = \inpvarsset$ and $\riemannianmanifold = \manifold$, so the pullback metric on $\inpvarsset$ reflects the Fisher-Rao geometry of the surrogate's predictive distributions.

% Combining the posterior map~\eqref{eq:posterior_map} with the Fisher-Rao metric on $\manifold$, the pullback $\manifoldmap_{\boiteration}^* \fishermetric$ equips the input space $\inpvarsset$ with a Riemannian metric tensor. The pair $(\inpvarsset, \manifoldmap_{\boiteration}^* \fishermetric)$ is therefore a Riemannian manifold whose geometry is determined entirely by the surrogate's predictive map and the Fisher-Rao structure of $\manifold$.

\section{Related Work}\label{sec:related_work}

\paragraph{Information geometry and applications.}
% A key finding of~\citet{amari1998natural, amari2016information} was that the Fisher information induces a Riemannian structure with Fisher-Rao metric on the statistical manifold of the distribution parameters. 
Natural gradient descent (NGD) is a key application of this theory~\citep{amari1998natural, martens2020new}. Work by~\citet{khan2023bayesian} shows how algorithms such as stochastic gradient descent (SGD) and Kalman filters can be derived from an algorithm called \emph{Bayesian Learning Rule}, and how natural gradients can improve the resulting updates. As another application of information geometry, \citet{kim2022fisher} presented Fisher-Sharpness-Aware Minimization (FSAM), which replaced the Euclidean ball in sharpness-aware minimization with a Fisher ellipsoid, improving generalization and robustness. \citet{beik2021learning} learn Riemannian manifolds for geodesic motion generation in robotics.
% These works demonstrate the value of Fisher-Rao geometry in model parameter spaces. Whether such geometry can be induced on a black-box \emph{search} space---and whether it would provide useful structure for acquisition optimization---remains an open question.

\paragraph{Geometry-aware Bayesian optimization.}\citet{jaquier2020bayesian, jaquier2020high} introduced geometry-aware kernels that used geodesic distances in Bayesian optimization when the search space is non-Euclidean, such as robot orientation. However, in our work, we explore the geometry that is induced in the search space by the posterior map of the surrogate model $\surrogatemodel_{\boiteration}$ as mentioned in~\eqref{eq:posterior_map}. We do not assume that the search space has any intrinsic geometric structure. Recent work by~\citet{yu2025gitbo} proposes GIT-BO with Tabular Foundation Model (TabPFNv2) as the surrogate model. They use the gradient of the predictive mean to identify a global active subspace for high-dimensional BO. Our use of Fisher geometry is local and trust-region based: we shape candidate regions around the current center using the predictive distribution, not only the mean. A direct comparison with GIT-BO would therefore mix surrogate choice with trust-region geometry.

\paragraph{Pullback metrics for probabilistic models.}
\citet{tosi2014metrics} proposed the idea of pulling back a Riemannian metric for Gaussian Process Latent Variable Models (GPLVMs). They derived the expected metric tensor and used the resulting geodesics for improving data interpolation. \citet{arvanitidis2017latent} showed how the latent space can be characterized by a stochastic Riemannian metric, resulting in improved sampling and interpolation in latent space. \citet{arvanitidis2021pulling} used the Fisher-Rao metric associated with the space of decoder distributions, enabling meaningful latent geometries for a broader class of decoders. \citet{rozo2025riemann} generalized the construction to wrapped GPLVMs defined on Riemannian data manifolds, where they pull back a Riemannian structure from the data manifold to the latent space.
% The BO setting poses a complementary problem: the map runs from a high-dimensional search space to a low-dimensional predictive distribution family, and the goal is acquisition optimization rather than interpolation. The Fisher-Rao weights that make the metric sensitive to predictive uncertainty arise naturally in this reversed direction.

\paragraph{High-dimensional Bayesian optimization.}
Recent work by~\citet{hvarfner2024vanilla, xu2024standard} has shown that BO can be effective in high dimensions (HDBO). They analyze the vanishing-gradient problem in surrogate-model and acquisition-function (AF) optimization in high dimensions. They propose a dimensionally scaled lengthscale prior to avoid the vanishing gradient problem. \citet{papenmeier2025understanding} confirm the finding and propose $\sqrt{\dims{\inpvars}}/10$ initialization of lengthscales for Maximum Likelihood Estimation (MLE) for GP surrogates. They also show that RAASP for AF optimization helps in preventing vanishing gradients.
% These methods can also be seen as tools for improving other lengthscale-based kernel methods in high dimensions.

Local search for global optimization and trust-region methods offer complementary approaches. TuRBO is a popular trust-region-based method proposed by~\citet{eriksson2019scalable} to address high dimensionality. They use local axis-aligned hyperrectangles to define the trust region around the current best point. Work by~\citet{papenmeier2022increasing, papenmeier2023bounce} extends trust-region-based methods to combinatorial and mixed spaces as well. However, existing trust-region constructions in this setting rely on GP surrogates with explicit lengthscale parameters like squared exponential kernel or Matern kernel.

% \emph{Embedding methods} have also been developed~\citep{wang2016rembo, letham2020alebo, nayebi2019hesbo} where the core idea is to project the search space onto low-dimensional subspaces. SAASBO~\citep{eriksson2021high} provided promising results by placing a horseshoe prior on the lengthscales for HDBO however it is not scalable to high number of iterations or high dimensional search spaces.
% BayeSQP~\citep{brunzema2026bayesqp} combines sequential quadratic programming (SQP) with Bayesian optimization and provides new perspective of optimization by modelling higher order characteristics of black box function using only zero order observations.

% \paragraph{Local BO convergence.}
% \citet{wu2023behavior} study local Bayesian optimization as biased gradient descent on the true objective. They bound the gradient-estimation error via the trace of the posterior gradient covariance and establish convergence guarantees for local BO procedures, showing that local information acquisition through trace minimization leads to stationary-point convergence. Their trace quantity measures how well a local descent direction can be recovered from noisy evaluations. A complementary question is when the surrogate's predictive geometry itself becomes informative or degenerates in high dimensions---a concern that persists independently of observation noise or batch size.

% \input{sections/convergence_analysis.tex}
\section{Manifold View of Bayesian Optimization}
\label{sec:manifold_view}

As mentioned in Section~\ref{sec:preliminaries}, the acquisition function depends on $\cusvector{\inpvars}$ through the surrogate's predictive parameters $\cusvector[bm]{\manifoldparameters}_{\boiteration, \cusvector{\inpvars}}$. Viewing the surrogate as a smooth map $\manifoldmap_{\boiteration}: \inpvarsset \to \manifold$ into the statistical manifold of predictive distributions shifts the analysis of acquisition-gradient behavior to the geometry of this map. Pulling back the Fisher--Rao metric from $\manifold$ to $\inpvarsset$ via $\manifoldmap_{\boiteration}$ yields the pullback Fisher tensor $\cusvector{\pullbackFIM}_{\boiteration}(\cusvector{\inpvars})$, which quantifies how sensitively $\probability(\obsouts ;\cusvector[bm]{\manifoldparameters}_{\boiteration, \cusvector{\inpvars}})$ changes under local perturbations of the input.

We define the pullback Fisher tensor and establish an acquisition gradient bound (Section~\ref{sec:universal_bound}). Then in Section~\ref{sec:sqrt_d} we show how the critical $\lengthscale$ scaling of $\sqrt{\dims{\inpvars}}$ from~\cite{hvarfner2024vanilla, papenmeier2025understanding, xu2024standard} and other HDBO findings like RAASP can be viewed from the geometric perspective of the pullback Fisher tensor.
% The main result (Proposition~\ref{prop:universal_bound}) establishes that the acquisition gradient norm is bounded by the product of $\mathrm{tr}(\pullbackFIM_{\boiteration})$ and an acquisition sensitivity that depends only on the functional form of the acquisition function. Gradient vanishing in high dimensions therefore traces to the geometry of the posterior map, not to any particular choice of acquisition.

\begin{assumption}[Surrogate Differentiability and Positive Fisher Metric]
    \label{ass:surrogate_regularity}
    $\manifoldmap_{\boiteration}$ is $C^1$ on $\inpvarsset$ (satisfied by GPs with differentiable kernels and by BNNs with smooth activations) and $\fishermetric_{\manifold}\!\left(\manifoldmap_{\boiteration}(\cusvector{\inpvars})\right) \succ 0$ for all $\cusvector{\inpvars}\in\inpvarsset$ (a minimum noise $\noisevar > 0$ ensures this for GP regression).
\end{assumption}

As discussed in Appendix~\ref{apx:acqf_regularity}, the first condition makes $\manifoldmap_{\boiteration}$ differentiable so that the Jacobian $\jacobianmap_{\boiteration}$ exists. The second ensures $\fishermetric_\manifold$ is a valid inner product on $\manifold$, so its inverse (required in the acquisition sensitivity~\eqref{eq:acqf_sensitivity}) is well-defined. The pullback Fisher tensor quantifies the sensitivity of the predictive distribution to perturbations in the input space at each input location.

\begin{definition}[Pullback Fisher Tensor]
    \label{def:pullback_tensor}
    Under Assumption~\ref{ass:surrogate_regularity}, the pullback Fisher tensor induced on the input space $\inpvarsset$ by the predictive distribution manifold $\manifold$ through the posterior map $\manifoldmap_{\boiteration}$ is

    \begin{equation}
        \label{eq:pullback_tensor}
        \cusvector{\pullbackFIM}_{\boiteration}(\cusvector{\inpvars}) = \jacobianmap_{\boiteration}(\cusvector{\inpvars})^\top\,
        \fishermetric_{\manifold}\!\bigl(\manifoldmap_{\boiteration}(\cusvector{\inpvars})\bigr)\,
        \jacobianmap_{\boiteration}(\cusvector{\inpvars})
        \;\in\; \mathbb{R}^{\dims{\inpvars}\times\dims{\inpvars}},
    \end{equation}
    where $\cusvector{\jacobianmap}_{\boiteration}(\cusvector{\inpvars}) = \partial\cusvector[bm]{\manifoldparameters}_{\boiteration, \cusvector{\inpvars}}/\partial\cusvector{\inpvars} \in \mathbb{R}^{\dims{\manifoldparameters}\times\dims{\inpvars}}$ is the Jacobian of the posterior map.
\end{definition}
Figure~\ref{fig:induced_geometry} shows the induced geometry on a 1D Euclidean input space by the GP surrogate with SE kernel and LogEI AF using the pullback metric tensor. Figure~\ref{fig:induced_geometry_2d} similarly shows the trace of the pullback metric tensor for a 2D input space. Since for $\dims{\inpvars}>1$, the pullback metric is a tensor, the eigenvalues of $\cusvector{\pullbackFIM}_{\boiteration}(\cusvector{\inpvars})$ reflect how strongly the predictive distribution changes with respect to perturbations in input space at $\cusvector{\inpvars}$ in respective directions. Large eigenvalues correspond to directions (not necessarily axes-aligned with $\inpvarsset$) along which the distribution changes rapidly, small or zero eigenvalues to directions that are locally uninformative. This effect can be seen in the "Gradient Analysis" panel of Figure~\ref{fig:induced_geometry_2d}, where we show the $\nabla_{\cusvector{\inpvars}}\acqf(\cusvector{\inpvars})$ and the natural gradient $\cusvector{\pullbackFIM}_{\boiteration}(\cusvector{\inpvars})^{-1}\nabla_{\cusvector{\inpvars}}\acqf(\cusvector{\inpvars})$ vector field and also zoomed in view for two input locations denoted by the magenta and red boxes.

The pullback tensor $\cusvector{\pullbackFIM}_{\boiteration}(\cusvector{\inpvars})$ is a Riemannian metric only when it is positive definite. Since $\cusvector{\pullbackFIM}_{\boiteration}$ has rank at most $\dims{\manifoldparameters}$ (at most $2$ for a univariate GP), the induced geometry on $\inpvarsset$ is degenerate when $\dims{\inpvars} > \dims{\manifoldparameters}$. Locally, perturbations at $\cusvector{\inpvars}$ in null directions leave the predictive parameters $\cusvector[bm]{\manifoldparameters}_{\cusvector{\inpvars}}$ unchanged.

\begin{figure}[t]
    \centering
    \begin{adjustbox}{max width=0.98\linewidth,max height=0.42\textheight,center}
        \includegraphics{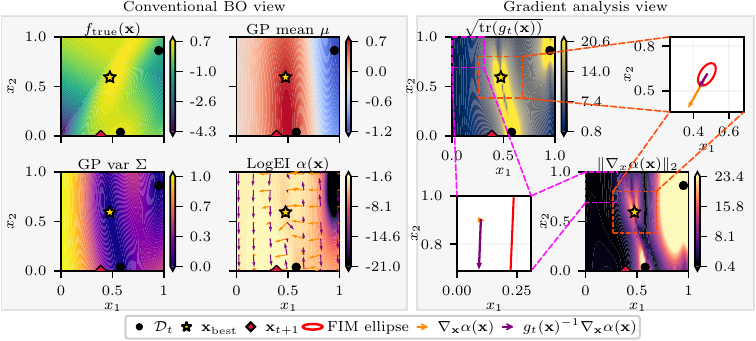}
    \end{adjustbox}
    \caption{Conventional view of BO with a GP surrogate (SE kernel) and LogEI AF. The LogEI plot also shows the gradient (orange) and natural gradient (purple) vector field in the input space. The "gradient analysis" shows pullback metric trace $\sqrt{\mathrm{tr}(\cusvector{\pullbackFIM}_{\boiteration}(\cusvector{\inpvars}))}$ and AF gradient norm $\|\nabla_{\cusvector{\inpvars}}\acqf(\cusvector{\inpvars})\|_2$. Trace is non-degenerate near the observed data and collapses in regions where the GP posterior has reverted to the prior (no observations). The red ellipses are calculated using the pullback metric tensor at respective input locations.}
    \label{fig:induced_geometry_2d}
    \vspace{-1.0em}
\end{figure}

\subsection{Acquisition Gradient Bound via Pullback Fisher Tensor}
\label{sec:universal_bound}

\begin{proposition}[Acquisition Gradient Bound]
    \label{prop:universal_bound}
    Let Assumption~\ref{ass:surrogate_regularity} hold. For any $\cusvector{\inpvars} \in \inpvarsset$, let $\cusvector[bm]{\manifoldparameters}_{\boiteration, \cusvector{\inpvars}}=\manifoldmap_{\boiteration}(\cusvector{\inpvars})$ and suppose the acquisition admits the representation
    $\acqf(\cusvector{\inpvars}) = \mathbb{E}_{\cusvector[bm]{\MCbasesamples}}[\acqfdistfunctional(\cusvector[bm]{\manifoldparameters}_{\boiteration, \cusvector{\inpvars}}, \cusvector[bm]{\MCbasesamples})]$,
    where the interchange of gradient and expectation is valid, for example under the scalar Gaussian reparameterization conditions of Appendix~\ref{apx:acqf_regularity}, and $\mathbb{E}_{\cusvector[bm]{\MCbasesamples}}[\|\nabla_{\cusvector[bm]{\manifoldparameters}_{\boiteration, \cusvector{\inpvars}}}\acqfdistfunctional\|^2] < \infty$. Then
    \begin{equation}
        \label{eq:universal_bound}
        \big\|\nabla_{\cusvector{\inpvars}}\,\acqf(\cusvector{\inpvars})\big\|_2
        \;\leq\;
        \sqrt{\acqfsensitivity\!\bigl(\cusvector[bm]{\manifoldparameters}_{\boiteration, \cusvector{\inpvars}}\bigr)
            \cdot \mathrm{tr}\!\left(\cusvector{\pullbackFIM}_{\boiteration}(\cusvector{\inpvars})\right)},
    \end{equation}
    where the \emph{acquisition sensitivity} is
    \begin{equation}
        \label{eq:acqf_sensitivity}
        \acqfsensitivity(\cusvector[bm]{\manifoldparameters}_{\boiteration, \cusvector{\inpvars}}) =
        \mathbb{E}_{\cusvector[bm]{\MCbasesamples}}\!\left[\nabla_{\cusvector[bm]{\manifoldparameters}_{\boiteration, \cusvector{\inpvars}}}\acqfdistfunctional\!\left(\cusvector[bm]{\manifoldparameters}_{\boiteration, \cusvector{\inpvars}}, \cusvector[bm]{\MCbasesamples}\right)^\top
        {\fishermetric_{\manifold}\!\left(\cusvector[bm]{\manifoldparameters}_{\boiteration, \cusvector{\inpvars}}\right)}^{-1}\right.
        \left.\nabla_{\cusvector[bm]{\manifoldparameters}_{\boiteration, \cusvector{\inpvars}}}\acqfdistfunctional\!\left(\cusvector[bm]{\manifoldparameters}_{\boiteration, \cusvector{\inpvars}}, \cusvector[bm]{\MCbasesamples}\right)\right].
    \end{equation}
\end{proposition}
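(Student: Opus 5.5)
Write $J=\jacobianmap_{\boiteration}(\cusvector{\inpvars})$, $F=\fishermetric_{\manifold}(\cusvector[bm]{\manifoldparameters}_{\boiteration,\cusvector{\inpvars}})$, and $g(\cusvector[bm]{\MCbasesamples})=\nabla_{\cusvector[bm]{\manifoldparameters}}\acqfdistfunctional(\cusvector[bm]{\manifoldparameters}_{\boiteration,\cusvector{\inpvars}},\cusvector[bm]{\MCbasesamples})$. The plan is a chain-rule computation followed by two Cauchy--Schwarz steps, one in the Fisher geometry and one over the base noise.

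First, we use the assumed validity of interchanging gradient and expectation, together with the $C^1$ property of $\manifoldmap_{\boiteration}$ from Assumption~\ref{ass:surrogate_regularity}. This gives
$\nabla_{\cusvector{\inpvars}}\acqf(\cusvector{\inpvars})=\mathbb{E}_{\cusvector[bm]{\MCbasesamples}}[J^\top g(\cusvector[bm]{\MCbasesamples})]=J^\top \bar g$, where $\bar g=\mathbb{E}[g]$. The vector $\bar g$ is finite because $\mathbb{E}\|g\|^2<\infty$.

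Second, we insert the Fisher metric. Since $F\succ 0$, the symmetric square roots $F^{1/2}$ and $F^{-1/2}$ exist. Hence $J^\top\bar g=(F^{1/2}J)^\top(F^{-1/2}\bar g)$. Using the operator-norm inequality $\|A^\top v\|_2\le\|A\|_2\|v\|_2$ and then $\|A\|_2\le\|A\|_F$, we get
\[
\|\nabla_{\cusvector{\inpvars}}\acqf\|_2^2\le \|F^{1/2}J\|_F^2\,\|F^{-1/2}\bar g\|_2^2 .
\]
The first factor is exactly $\mathrm{tr}(J^\top F J)=\mathrm{tr}(\cusvector{\pullbackFIM}_{\boiteration}(\cusvector{\inpvars}))$ by Definition~\ref{def:pullback_tensor}. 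The second factor is $\bar g^\top F^{-1}\bar g$.

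Third, we pass the expectation outside. The map $v\mapsto v^\top F^{-1}v$ is a convex quadratic form, since $F^{-1}\succ 0$. Jensen's inequality, or equivalently Cauchy--Schwarz in the $F^{-1}$ inner product, then gives $\bar g^\top F^{-1}\bar g\le\mathbb{E}[g^\top F^{-1}g]=\acqfsensitivity(\cusvector[bm]{\manifoldparameters}_{\boiteration,\cusvector{\inpvars}})$. The right-hand side is finite because $\mathbb{E}\|g\|^2<\infty$ and $F^{-1}$ is a fixed bounded matrix at the given $\cusvector{\inpvars}$. Combining the three steps and taking square roots yields \eqref{eq:universal_bound}.

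The main obstacle is the first step: justifying differentiation under the expectation for MC integrands such as $\max(\cdot,0)$, which are not differentiable everywhere. We take this from the hypothesis, as in Appendix~\ref{apx:acqf_regularity}. Two routes are available: dominated convergence using the square-integrability of $g$ together with a local Lipschitz bound in $\cusvector[bm]{\manifoldparameters}$, or the observation that nondifferentiability occurs only on a $\cusvector[bm]{\MCbasesamples}$-null set under the scalar Gaussian reparameterization. Everything after the chain rule is elementary linear algebra.
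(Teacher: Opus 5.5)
Your proposal is correct and follows the paper's proof step for step: the chain rule under the assumed interchange, the insertion $J^\top\bar g=(F^{1/2}J)^\top(F^{-1/2}\bar g)$, the bound $\|A^\top u\|_2\le\|A\|_F\|u\|_2$ with $\|F^{1/2}J\|_F^2=\mathrm{tr}(J^\top F J)$ as the pullback trace, and Jensen's inequality for the convex quadratic $v\mapsto v^\top F^{-1}v$. Passing through the operator norm before the Frobenius norm is only a slightly finer intermediate in the same Cauchy--Schwarz step.
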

Proof is given in Appendix~\ref{apx:proof_universal}. \autoref{eq:universal_bound} separates the roles of the surrogate and the acquisition in the inner BO optimization problem. The trace $\mathrm{tr}(\cusvector{\pullbackFIM}_{\boiteration}(\cusvector{\inpvars}))$ depends only on the surrogate map and measures how strongly the predictive distribution changes under local perturbations at $\cusvector{\inpvars}$. \autoref{eq:acqf_sensitivity} defines the \emph{acquisition-side sensitivity} of the AF value to perturbations on the predictive manifold $\manifold$. For a fixed AF and predictive distribution family, the formula for acquisition-side sensitivity depends on the current predictive state $\cusvector[bm]{\manifoldparameters}$ and the manifold metric $\fishermetric_{\manifold}(\cusvector[bm]{\manifoldparameters})$, but not on which surrogate produced that state. Surrogates represented through the same Gaussian predictive family, such as GP posteriors and Gaussian predictive approximations to Bayesian neural networks (BNNs), therefore use the same acquisition-side formula when evaluated at the same predictive parameters. The surrogate-specific effect enters through $\mathrm{tr}(\cusvector{\pullbackFIM}_{\boiteration}(\cusvector{\inpvars}))$. Appendix~\ref{apx:gaussian_sensitivity} discusses $\acqfsensitivity$ for common AFs under a Gaussian predictive distribution.

When the surrogate maps a region of input space to a near-constant predictive distribution, or when the posterior reverts to the prior away from the data, $\mathrm{tr}(\cusvector{\pullbackFIM}_{\boiteration}(\cdot))$ collapses in that region. Proposition~\ref{prop:universal_bound} then shows that bounded acquisition-side sensitivity forces the acquisition gradient to become small there. This effect is visible in Figure~\ref{fig:induced_geometry} and Figure~\ref{fig:induced_geometry_2d}: the AF gradient vanishes where the pullback trace collapses and remains active where the local predictive geometry is non-degenerate.
% 
% As explained in Appendix~\ref{apx:acqf_regularity}, the regularity conditions are satisfied by various acquisition functions like EI, Upper Confidence Bound (UCB), Probability of Improvement (PI), etc. Under Assumption~\ref{ass:surrogate_regularity}, $\fishermetric_{\manifold}(\cusvector[bm]{\manifoldparameters}_{\boiteration, \cusvector{\inpvars}}) \succ 0$ forces $\cusvector{\distcov}_{\boiteration}(\cusvector{\inpvars}) \succ 0$, which prevents the $\nabla\cusvector{\distcov}_{\boiteration}^{1/2} \to \infty$ hence keeping the $\nabla_{\cusvector[bm]{\manifoldparameters}_{\boiteration, \cusvector{\inpvars}}}\acqfdistfunctional$ square-integrable.

% -------------------------------------------------------------
\subsection{Unified FIM-based Framework for HDBO analysis}
\label{sec:sqrt_d}
\label{sec:fim_explanations}

\paragraph{Lengthscale Scaling and Random Axis Aligned Subspace Perturbations}
For GP surrogates with AFs such as EI, UCB, or PI, the sensitivity $\acqfsensitivity(\manifoldmap_{\boiteration}(\cusvector{\inpvars}))$ remains uniformly bounded as $\dims{\inpvars}$ grows, as shown in Appendix~\ref{apx:proof_gaussian}. The experiments use LogEI, for which the same local regularity argument applies in the strictly positive-variance regime considered here.
% Using Proposition~\ref{prop:universal_bound}, it can be deduced that for bounded $\acqfsensitivity(\manifoldmap_{\boiteration}(\cusvector{\inpvars}))$, if the pullback trace $\mathrm{tr}(\pullbackFIM_{\boiteration}(\cusvector{\inpvars}))$ collapses, then the AF gradient also vanishes. As shown in Figure~\ref{fig:induced_geometry} and Figure~\ref{fig:induced_geometry_2d}, the pullback trace collapses in regions where the GP posterior has reverted to the prior (no observations) and the AF gradient vanishes in the same regions. The visualization also explains both the $\sqrt{\dims{\inpvars}}$ lengthscale scaling~\cite{xu2024standard,hvarfner2024vanilla} and the effectiveness of RAASP~\citep{papenmeier2022increasing}. 
As shown in Appendix~\ref{apx:proof_hd}, for i.i.d.\ inputs drawn uniformly from $[0,1]^{\dims{\inpvars}}$, the expected Euclidean distance $\mathbb{E}[d] = \Theta(\sqrt{\dims{\inpvars}})$, where $d=\|\cusvector{\inpvars} - \cusvector{\inpvars}'\|$, with exact asymptotic $\mathbb{E}[d] \sim \frac{1}{\sqrt{6}}\sqrt{\dims{\inpvars}}$ as $\dims{\inpvars} \to \infty$. For the isotropic SE kernel,
\(
\gpkernel_\lengthscale(\cusvector{\inpvars}, \cusvector{\inpvars}')
= \signalvar\exp\!\left(-\frac{d^2}{2\lengthscale^2}\right),
\)
so keeping the exponent non-degenerate requires $\lengthscale = \Theta(\sqrt{\dims{\inpvars}})$. Isotropic Mat\'{e}rn kernels depend on the normalized distance $d/\lengthscale$, which leads to the same scaling. If the lengthscale grows more slowly, the posterior map becomes nearly constant away from the data and $\mathrm{tr}(\pullbackFIM_{\boiteration}(\cusvector{\inpvars}))$ collapses, which forces the AF gradient to become small by Proposition~\ref{prop:universal_bound}. This recovers the scaling requirement identified by~\citet{xu2024standard, hvarfner2024vanilla, papenmeier2025understanding} from the viewpoint of induced geometry. The same trace-collapse argument also helps explain why RAASP around observed data~\citep{papenmeier2022increasing} improves multi-start AF optimization. The pullback Fisher tensor therefore provides a common language for both failure modes and fixes in HDBO.

\begin{wrapfigure}{r}{0.48\textwidth}
    \centering
    \begin{adjustbox}{max width=0.48\textwidth,max height=0.48\textheight,center}
        \includegraphics{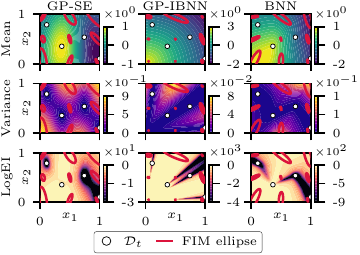}
    \end{adjustbox}
    \caption{Ellipses using $\pullbackFIM(\cusvector{\inpvars})$ from~\eqref{eq:pullback_tensor} for different surrogates: GP with squared exponential (SE) kernel, GP with Infinite Width Bayesian Neural Network (IBNN) kernel~\cite{li2023study} and Bayesian Neural Network (BNN).}
    \label{fig:fitr_ellipses}
    \vspace{-1.0em}
\end{wrapfigure}

\paragraph{TuRBO connection}
Figure~\ref{fig:induced_geometry_2d} shows that the pullback trace and the AF gradient remain non-degenerate near the observed data. TuRBO develops on a similar principle, where it centers a local axis-aligned TR at the incumbent and adapts its size over time using the learned lengthscales~\cite{eriksson2019scalable}. The dimensions with longer lengthscales have a longer TR length. Appendix~\ref{apx:proof_turbo_fitr} shows that the metric tensor at a local optimum $\cusvector{\inpvars}_{\mathrm{best}}$ is often dominated by the variance-gradient term, which is inversely proportional to the lengthscales. The ellipsoid made using the pullback metric would be longer in the dimensions with longer lengthscales. This is similar to how TuRBO uses the learned lengthscales to create TR. This motivates the use of pullback-metric-based TR for BO, which we call the Fisher-Information Trust Region (FITR). The FITR is a location-dependent TR that can be extended to any surrogate for which the derivative of the predictive distribution with respect to the input variables can be calculated.

\section{Fisher-Information Trust Region (FITR)}
\label{sec:fitr}

As explained in Appendix~\ref{apx:turbo}, TuRBO centers the trust region at the best observed location $\cusvector{\inpvars}_{\mathrm{best}}$ and scales the dimension-wise side lengths using kernel lengthscales~\citep{eriksson2019scalable}. This construction is effective because it avoids performing the inner AF optimization in large regions where the surrogate has little information. Since the kernels are isotropic, the notion of distance underlying TuRBO remains Euclidean. TuRBO has proven effective in practice, but is also limited to SE or Matérn kernels.

Section~\ref{sec:manifold_view} suggests a different local notion of distance. The acquisition depends on $\cusvector{\inpvars}$ through $\manifoldmap_{\boiteration}(\cusvector{\inpvars})$, so the relevant local geometry is defined by the pullback Fisher tensor $\cusvector{\pullbackFIM}_{\boiteration}(\cusvector{\inpvars})$ in~\eqref{eq:pullback_tensor}. This tensor can be computed for any differentiable probabilistic surrogate that satisfies Assumption~\ref{ass:surrogate_regularity}. The ideal local Fisher region is the ellipsoid
\begin{equation}
    \label{eq:fitr_full_ellipsoid}
    \left\{\cusvector{\inpvars}\in\inpvarsset :
    (\cusvector{\inpvars}-\cusvector{\inpvars}_{\boiteration,c})^\top
    \cusvector{\pullbackFIM}_{\boiteration}(\cusvector{\inpvars}_{\boiteration,c})
    (\cusvector{\inpvars}-\cusvector{\inpvars}_{\boiteration,c})
    \leq \TRlength_{\boiteration}^2\right\}.
\end{equation}
% \paragraph{KL interpretation.}
% Using FITR has a direct probabilistic interpretation. For a smooth predictive family, the FIM is the second-order Taylor expansion of Kullback-Leibler (KL) divergence~\citep{amari2000methods, amari2016information} as shown in Appendix~\ref{apx:kl_trust_region}.
% As shown in Appendix~\ref{apx:kl_trust_region}, for $\delta=\cusvector{\inpvars}-\cusvector{\inpvars}_{\boiteration,c}$,
% \begin{equation}
%     \label{eq:fitr_kl_motivation}
%     D_{\mathrm{KL}}\!\left(
%     \probability(\obsouts;\manifoldmap_{\boiteration}(\cusvector{\inpvars}_{\boiteration,c}))
%     \,\middle\|\,
%     \probability(\obsouts;\manifoldmap_{\boiteration}(\cusvector{\inpvars}_{\boiteration,c}+\delta))
%     \right)
%     \approx
%     \tfrac{1}{2}\delta^\top \pullbackFIM_{\boiteration}(\cusvector{\inpvars}_{\boiteration,c}) \delta,
% \end{equation}up to second-order terms. 
\paragraph{Limitations.}
Appendix~\ref{apx:kl_trust_region} shows that the FIM is the second-order Taylor expansion of Kullback-Leibler (KL) divergence~\citep{amari2000methods, amari2016information}. A purely KL-derived trust region is inherently mode-seeking and can become too exploitative near local optima~\cite{shlens2014notes}. We therefore retain TuRBO's outer trust-region scaling rule, Algorithm~\ref{alg:turbo_candidate}, to preserve its exploration behavior. A second issue is scalability with $\dims{\inpvars}$, since the full local ellipsoid requires inversion or whitening of the pullback tensor. For scalar predictive families, the pullback rank is at most two, so the tensor is rank-deficient when $\dims{\inpvars}>2$. Without regularization, near-null directions would produce arbitrarily large radii. Popular BO frameworks such as BoTorch also largely support sampling from axis-aligned trust regions.  One can create a hyperrectangle from the pullback tensor by using the infinity norm and whitening to make it axis-aligned. However, once such a region intersects the constraints/bounds of $\inpvarsset$, the feasible set becomes a clipped polytope rather than a simple box. Polytope sampling is substantially slower and less efficient than simple axis-aligned Sobol sampling for multi-start AF optimization.

\paragraph{Proposed method.}
The experiments in this paper use a diagonal approximation of the pullback Fisher tensor rather than the full ellipsoid in~\eqref{eq:fitr_full_ellipsoid}. At the trust-region center $\cusvector{\inpvars}_{\boiteration,c}$, we first estimate the diagonal entries of the unregularized pullback Fisher tensor by Monte Carlo (MC) samples from the predictive posterior:
\begin{equation}
    \label{eq:fitr_diag_estimate}
    \widehat{\pullbackFIM}_{\boiteration,j}(\cusvector{\inpvars}_{\boiteration,c})
    =
    \frac{1}{\MCsamples}
    \sum_{\mcsamples=1}^{\MCsamples}
    \left(
    \left.\frac{\partial}{\partial \inpvars_j}
    \log \probability(\tilde{\obsouts}_{\mcsamples}\,;\manifoldmap_{\boiteration}(\cusvector{\inpvars}))\right|_{\cusvector{\inpvars}=\cusvector{\inpvars}_{\boiteration,c}}
    \right)^2,
    \qquad
    \tilde{\obsouts}_{\mcsamples}\sim
    \probability(\obsouts;\manifoldmap_{\boiteration}(\cusvector{\inpvars}_{\boiteration,c})),
\end{equation}
where $\MCsamples$ is the number of MC samples and $j \in \{1,\ldots,\dims{\inpvars}\}$. Appendix~\ref{apx:fitr_estimator} shows that~\autoref{eq:fitr_diag_estimate} is an unbiased MC estimator of the corresponding diagonal entry of the pullback Fisher tensor~\citep{martens2020new}. Regularization is added only after this estimation step.

\begin{wrapfigure}{r}{0.48\textwidth}
    \vspace{-0.9em}
    \begin{minipage}{0.48\textwidth}
        \begin{algorithm}[H]
            \caption{FITR weight construction}
            \label{alg:fitr_weights}
            \KwIn{Center $\cusvector{\inpvars}_{\boiteration, c}$, data $\observeddata_{\boiteration}$}
            \KwOut{Raw FITR weights $\TRlengthweight_{\boiteration,j}$ for $j=1,\ldots,\dims{\inpvars}$}
            Fit surrogate on $\observeddata_{\boiteration}$\;
            Draw $\tilde{\obsouts}_{\mcsamples}\sim \probability(\obsouts;\manifoldmap_{\boiteration}(\cusvector{\inpvars}_{\boiteration,c}))$ for $\mcsamples=1,\ldots,\MCsamples$\;
            Estimate diagonal pullback Fisher entries as mentioned in~\eqref{eq:fitr_diag_estimate}\;
            Compute $\TRregularization_{\boiteration}$ using estimated diagonal entries $\widehat{\pullbackFIM}_{\boiteration,j}(\cusvector{\inpvars}_{\boiteration,c})$\;
            Form $\TRlengthweight_{\boiteration,j}\leftarrow (\widehat{\pullbackFIM}_{\boiteration,j}(\cusvector{\inpvars}_{\boiteration,c})+\TRregularization_{\boiteration})^{-1/2}$ for each $j=1,\ldots,\dims{\inpvars}$\;
            \Return $\cusvector[bm]{\TRlengthweight}_{\boiteration}= ( \TRlengthweight_{\boiteration,j} )_{j=1,\ldots,\dims{\inpvars}}$\;
        \end{algorithm}
    \end{minipage}
    \vspace{-1.1em}
\end{wrapfigure}

We use diagonal Tikhonov-style regularization~\citep{martens2020new} to avoid unbounded stretching in directions associated with zero or near-zero eigenvalues of the pullback tensor:
\[
    \TRlengthweight_{\boiteration,j}=\left(\widehat{\cusvector{\pullbackFIM}}_{\boiteration,j}(\cusvector{\inpvars}_{\boiteration,c})+\TRregularization_{\boiteration}\right)^{-1/2}.
\]
% \begin{equation}
%     \label{eq:fitr_weights}
% \TRlengthweight_{\boiteration,j}=\left(\widehat{\pullbackFIM}_{\boiteration,j}(\cusvector{\inpvars}_{c})+\TRregularization_{\boiteration}\right)^{-1/2},
% \end{equation}
The regularization parameter is chosen adaptively from the local Fisher scale as $\TRregularization_{\boiteration}=\frac{1}{\dims{\inpvars}}\sum_{j=1}^{\dims{\inpvars}} \widehat{\cusvector{\pullbackFIM}}_{\boiteration,j}(\cusvector{\inpvars}_{\boiteration,c})+ \varepsilon_{\mathrm{jit}}$, where $\varepsilon_{\mathrm{jit}}$ is a small numerical jitter to prevent division by zero. This damping adds an isotropic curvature floor and interpolates between induced geometry and Euclidean geometry when the pullback tensor is degenerate~\citep{martens2020new}.
% However, it does not provide a formal guarantee of avoiding local optima exploitation behavior. 
The $\ell^2$ norm gives an ellipsoid from~\eqref{eq:fitr_full_ellipsoid}. When we use the $\ell_\infty$ norm, we obtain a box-shaped TR. We implemente axis-aligned FITR using only the diagonal of the regularized pullback tensor. This yields a location-dependent axis-aligned box that remains compatible with standard BO toolchains and avoids the clipped-polytope sampling problem at the domain boundary. The diagonal approximation also keeps the method practical in higher dimensions compared with the full whitening-based variant. TuRBO computes trust-region weights as in Algorithm~\ref{alg:turbo_tr_length} and then uses volume normalization and trust-region-bounded AF optimization to select the next candidate, as outlined in Algorithm~\ref{alg:turbo_candidate}. FITR replaces only the local weight construction in Algorithm~\ref{alg:turbo_tr_length} and keeps the same volume normalization and outer control logic. The resulting trust region is a box in~\eqref{eq:fitr_box}.
\begin{equation}
    \label{eq:fitr_box}
    \inpvarsset^{\mathcal{TR}}_{\boiteration}
    =
    \left\{
    \cusvector{\inpvars}\in\inpvarsset :
    \left\|\mathrm{diag}(\cusvector[bm]{\TRlengthweight}_{\boiteration})^{-1}
    (\cusvector{\inpvars}-\cusvector{\inpvars}_{\boiteration,c})\right\|_{\infty}
    \leq \frac{\TRlength_{\boiteration}}{2}
    \right\},
\end{equation}
where $\cusvector[bm]{\TRlengthweight}_{\boiteration}$ denotes the volume-normalized weights obtained from the raw values returned by Algorithm~\ref{alg:fitr_weights}. The only algorithmic change relative to TuRBO is therefore the local weight construction: ARD lengthscales are replaced by regularized pullback-Fisher sensitivities, while the trust-region scaling rule and outer control logic remain unchanged. We refer to the full whitening-based construction as FullFITR and treat it as an ablation rather than the default method.

\section{Experiments}
\label{sec:experiments}
\begin{figure}[t]
    \centering
    \includegraphics[width=\linewidth]{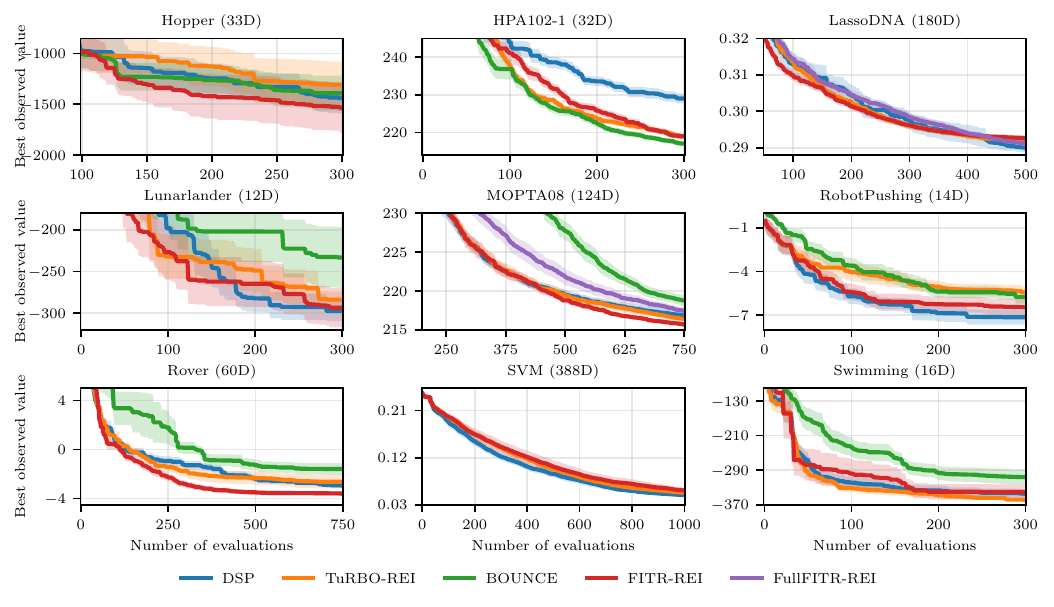}
    \caption{BO performance (lower is better) comparison for different methods with a GP using an SE kernel. Solid curves show means over $11$ trials, and shaded bands show standard error. FITR-REI and TuRBO-REI use the same GP modeling pipeline and regional trust-region center selection; they differ only in whether the trust-region shape is determined by ARD lengthscales or by the pullback Fisher tensor. FullFITR-REI was evaluated only on a subset of problems and is discussed separately in the text.}
    \label{fig:bo_results}
\end{figure}

\paragraph{Problems and baselines.}
We evaluate FITR with a GP using an SE kernel against strong baselines such as TuRBO~\citep{eriksson2019scalable}, DSP~\citep{hvarfner2024vanilla}, and BOUNCE~\citep{papenmeier2023bounce} on nine continuous benchmark problems, as shown in Figure~\ref{fig:bo_results}. These are control and robotics benchmarks commonly used in high-dimensional BO~\citep{eriksson2019scalable,papenmeier2023bounce}. HPA102-1 is a human-powered aircraft design task from the HPA benchmark suite~\citep{namura2025single}.
MOPTA08 is an automotive design benchmark~\citep{eriksson2019scalable}; LassoDNA and SVM test sparse regression and hyperparameter tuning behavior. Implementation details are given in Appendix~\ref{apx:implementation_details}. We also evaluate the same FITR construction with a non-lengthscale kernel, namely IBNN~\citep{li2023study}, as a transferability check beyond standard ARD kernels. Figures~\ref{fig:bo_results} and~\ref{fig:bo_results_ibnn} report the best observed objective value in the benchmark convention, while the BO procedure itself is written in maximization form.
% \footnote{Code implementation: \url{https://github.com/Sam4896/FITR-BO}}

\begin{figure}[t]
    \centering
    \includegraphics[width=\linewidth]{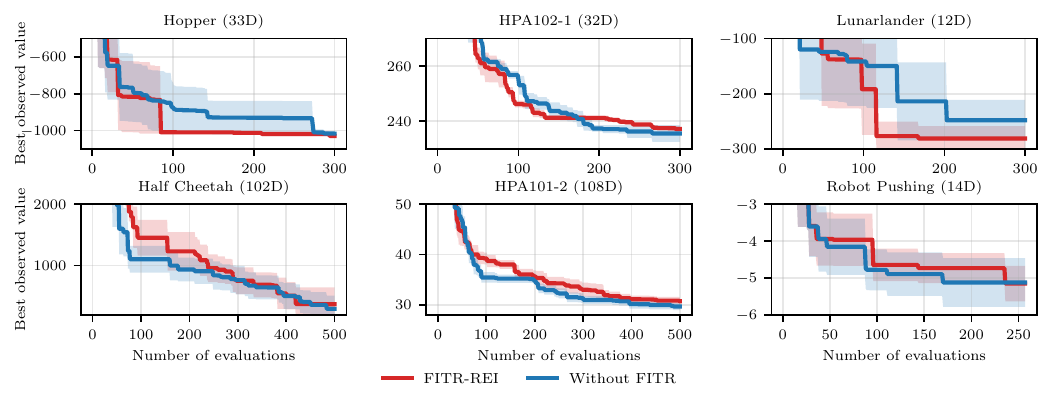}
    \caption{BO performance comparison (lower is better) for different methods with a GP using an IBNN kernel of depth $3$. Solid curves show means over $5$ trials, and shaded bands show standard error.}
    \label{fig:bo_results_ibnn}
\end{figure}

\paragraph{Results}
\label{sec:results}

Figure~\ref{fig:bo_results} shows that, for a GP with an SE kernel, FITR improves on TuRBO-REI on several benchmarks and remains competitive with strong baselines such as DSP and BOUNCE on the remainder. This matches the motivation from Section~\ref{sec:fitr}: local Fisher information can provide a useful location-dependent trust-region shape beyond fitted kernel lengthscales. Figure~\ref{fig:fitr_log_diagnostics} in Appendix~\ref{apx:implementation_details} reports the corresponding trust-region diagnostics. With diagonal Tikhonov regularization, the anisotropy of FITR remains bounded across the benchmark suite while still adapting to local predictive geometry. The trust-region length $\kappa_t$ for FITR and TuRBO stays qualitatively similar across problems, which suggests that the performance differences in Figure~\ref{fig:bo_results} come from the trust-region shape rather than from a different restart or length-adaptation schedule. Figure~\ref{fig:fitr_log_diagnostics} also shows that the per-iteration candidate-generation time for FITR remains in the same order as TuRBO, with a modest overhead from computing the local Fisher weights.
% Using the pullback tensor, the TR length using FITR is inversely proportional to the sensitivity of the predictive distribution with respect to the input axes. This is exactly the asymmetry TuRBO encodes only through ARD lengthscales but with the added behavior of local geometry.

On the subset of problems where we evaluated FullFITR, it did not outperform diagonal FITR. A likely reason is that the pullback geometry is local. If the geometry varies rapidly near the trust-region center, the rotated region estimated at the center need not model nearby candidate locations better than the diagonal approximation. Boundary clipping and the higher numerical cost of the full tensor may also contribute. We therefore use diagonal FITR as the practical default.

The IBNN experiments in Figure~\ref{fig:bo_results_ibnn} use the same pipeline with a depth-$3$ IBNN kernel~\citep{li2023study}. In this setting we compare FITR only against the non-trust-region baseline, so the figure should be read as a transferability check rather than as a full non-lengthscale benchmark comparison. FITR improves on the non-trust-region baseline on a subset of tasks and remains comparable on the remainder. These results show that the same FITR pipeline can be paired with a non-lengthscale kernel, although the gains for anisotropic kernels in trust-region methods appear task-dependent.

\section{Limitations}
\label{sec:limitations}

Proposition~\ref{prop:universal_bound} bounds the acquisition gradient during inner optimization; it does not imply a regret guarantee or a global convergence statement for the outer BO loop. One could also replace the trust region by a KL-divergence constraint and solve the resulting problem with an Augmented Lagrangian Method (ALM)~\citep{birgin2014practical}. During our experiments, however, this ALM-based variant became highly exploitative and less exploratory, which degraded performance.

For non-isotropic kernels like IBNNs, the pullback trace need not collapse with $\dims{\inpvars}$ in the same way as for many lengthscale-dominated GPs~\citep{li2023study}, so the vanishing-gradient narrative from Section~\ref{sec:fim_explanations} applies less sharply. The improvement in performance using FITR becomes task-dependent. A precise analysis of when local Fisher geometry predicts a measurable search advantage beyond isotropic or deep-kernel setups is left for future work.

\section{Conclusion}
\label{sec:conclusion}

We propose an information-geometric perspective on Bayesian optimization (BO). The surrogate is treated as a smooth map from the input space to the manifold of predictive distributions, and reparameterizable acquisition functions depend on the input only through these predictive parameters. Proposition~\ref{prop:universal_bound} separates the norm of the acquisition gradient into two factors: an acquisition-side sensitivity and the trace of the pullback Fisher tensor induced by the surrogate. For bounded acquisition-side sensitivity, the acquisition gradient is vanished when the pullback trace collapses. This provides a common explanation for several empirical observations in high-dimensional BO, including dimension-scaled lengthscales and local initialization around observed data.

Using these insights, we propose the Fisher-Information Trust Region (FITR), a simple trust-region method for differentiable probabilistic surrogates that satisfy the stated regularity assumptions. FITR shapes the trust region using local pullback-Fisher weights at the current center and replaces the lengthscale-based scaling in TuRBO by a diagonal approximation to the regularized pullback Fisher tensor. In our experiments, this diagonal FITR variant improves on TuRBO-REI on several GP benchmarks and remains competitive elsewhere. FITR does not introduce task-specific tuning and the same construction can be extended to other predictive families and to batch or multi-objective settings.

This perspective opens several directions for future work, including tighter boundary handling for full rotated Fisher regions and sharper theory for non-lengthscale kernels. The present work studies a theoretical aspect of BO and is intended for black-box optimization methodology; we do not foresee any broader societal impacts.

\bibliographystyle{unsrtnat}
\bibliography{style/references}

\appendix
\newpage\section{Gaussian Process}\label{apx:gaussian_process}
A Gaussian process (GP) is a distribution over functions, where we place a prior on the unknown objective and update our belief using the noisy observations~\cite{rasmussen2003gaussian}.

We model the latent function as
\[
    \processfunction \sim \GP\!\big(\gpmean(\cdot), \gpkernel(\cdot,\cdot')\big),
\]
where $\gpmean(\cdot)$ is the prior mean and $\gpkernel(\cdot,\cdot')$ is the covariance kernel. For observed input-output pairs $\observeddata_{\boiteration}=\{(\cusvector{\inpvars}_{\numobservation},\obsouts_{\numobservation})\}_{\numobservation=1}^{\Numobservation}$, we use the Gaussian observation model
\[
    \obsouts_{\numobservation}=\processfunction(\cusvector{\inpvars}_{\numobservation})+\noise_{\numobservation},\qquad
    \noise_{\numobservation}\sim\mathcal{N}(0,\noisevar).
\]
The variance $\noisevar$ captures observation noise and appears as a diagonal correction to the kernel matrix.

Given $\observeddata_{\boiteration}=\{(\cusvector{\inpvars}_{\numobservation},\obsouts_{\numobservation})\}_{\numobservation=1}^{\Numobservation}$, the predictive distribution at a query point $\cusvector{\inpvars}$ is Gaussian
\[
    \probability(\obsouts\mid \cusvector{\inpvars},\observeddata_{\boiteration})
    =\mathcal{N}\!\left(\distmean_{\boiteration}(\cusvector{\inpvars}),\,\distcov_{\boiteration}(\cusvector{\inpvars})\right),
\]
where the posterior mean and variance are given by standard Gaussian conditioning identities~\cite{rasmussen2003gaussian}
\begin{align}
    \distmean_{\boiteration}(\cusvector{\inpvars})
     & =
    \gpmean(\cusvector{\inpvars})
    +\gpkernel(\cusvector{\inpvars},\cusmatrix{\Inpvars})
    \left(\gpkernel(\cusmatrix{\Inpvars},\cusmatrix{\Inpvars})+\noisevar\mathbf{I}\right)^{-1}
    \left(\cusvector{\obsouts}-\gpmean(\cusmatrix{\Inpvars})\right), \\
    \distcov_{\boiteration}(\cusvector{\inpvars})
     & =
    \gpkernel(\cusvector{\inpvars},\cusvector{\inpvars})
    -\gpkernel(\cusvector{\inpvars},\cusmatrix{\Inpvars})
    \left(\gpkernel(\cusmatrix{\Inpvars},\cusmatrix{\Inpvars})+\noisevar\mathbf{I}\right)^{-1}
    \gpkernel(\cusmatrix{\Inpvars},\cusvector{\inpvars}).
\end{align}
and the GP hyperparameters (for example, lengthscales in $\GPhyperparam$ and the noise level $\noisevar$) are estimated by maximizing the log marginal likelihood~\cite{rasmussen2003gaussian}
\begin{equation}
    \log \probability(\cusvector{\obsouts}\mid \cusmatrix{\Inpvars},\GPhyperparam)
    =
    -\frac{1}{2}\left(\cusvector{\obsouts}-\gpmean(\cusmatrix{\Inpvars})\right)^{\top}
    \GPkernel_{\GPhyperparam}^{-1}
    \left(\cusvector{\obsouts}-\gpmean(\cusmatrix{\Inpvars})\right)
    -\frac{1}{2}\log\!\left|\GPkernel_{\GPhyperparam}\right|
    -\frac{\Numobservation}{2}\log(2\pi).
\end{equation}
where $\GPkernel_{\GPhyperparam}=\gpkernel(\cusmatrix{\Inpvars},\cusmatrix{\Inpvars})+\noisevar\mathbf{I}$~\cite{rasmussen2003gaussian}.

\section{Common Acquisition Functions: Analytic and MC Forms}\label{apx:acqf_table}
Section~\ref{sec:preliminaries} introduces AF and its reparameterized form~\eqref{eq:acqf_expectation}. When the acquisition is available in closed form, $\acqfdistfunctional$ does not depend on $\cusvector[bm]{\MCbasesamples}$ and~\eqref{eq:acqf_expectation} collapses to
\begin{equation}
    \label{eq:acqf_analytic}
    \acqf(\cusvector{\inpvars}) = \mathbb{E}_{\cusvector[bm]{\MCbasesamples}}\!\left[\acqfdistfunctional\!\left(\cusvector[bm]{\manifoldparameters}_{\boiteration,\cusvector{\inpvars}};\cusvector[bm]{\acqfparams}\right)\right]
    = \acqfdistfunctional\!\left(\cusvector[bm]{\manifoldparameters}_{\boiteration,\cusvector{\inpvars}};\cusvector[bm]{\acqfparams}\right).
\end{equation}

Table~\ref{tab:acqf_analytic} shows some closed-form AF for a scalar Gaussian predictive, where $z=(\distmean_{\boiteration,\cusvector{\inpvars}}-\obsouts^{\incumbentobs})/\distcov_{\boiteration,\cusvector{\inpvars}}^{1/2}$, $\probabilitycdf_{\scriptscriptstyle\mathcal{N}}$ is the Cumulative Distribution Function (CDF) and $\probabilitypdf_{\scriptscriptstyle\mathcal{N}}$ is Probability Density Function (PDF) of normal distribution.

\begin{table}[hb]
    \centering
    \caption{Closed-form AF $\acqfdistfunctional(\cusvector[bm]{\manifoldparameters}_{\boiteration,\cusvector{\inpvars}};\cusvector[bm]{\acqfparams})$ under~\eqref{eq:acqf_analytic} for a scalar Gaussian predictive $\cusvector[bm]{\manifoldparameters}_{\boiteration,\cusvector{\inpvars}}=(\distmean_{\boiteration,\cusvector{\inpvars}},\distcov_{\boiteration,\cusvector{\inpvars}})$. EI and PI use incumbent $\obsouts^{\incumbentobs}$; UCB uses exploration weight $\kappa_{\mathrm{UCB}}>0$.}
    \label{tab:acqf_analytic}
    \vspace{0.5em}
    \small
    \begingroup
    \renewcommand{\tabularxcolumn}[1]{>{\centering\arraybackslash}m{#1}}
    \setlength{\tabcolsep}{2pt}
    \renewcommand{\arraystretch}{1.45}
    \begin{tabularx}{0.7\linewidth}{@{} >{\centering\arraybackslash}m{1.75cm} X @{}}
        \toprule
        \textbf{Acquisition} & \textbf{Closed-form $\acqfdistfunctional(\distmean_{\boiteration,\cusvector{\inpvars}},\distcov_{\boiteration,\cusvector{\inpvars}};\cusvector[bm]{\acqfparams})$}                                                                                \\
        \midrule\addlinespace[0.15em]
        EI                   & $\bigl(\distmean_{\boiteration,\cusvector{\inpvars}}-\obsouts^{\incumbentobs}\bigr)\,\probabilitycdf_{\scriptscriptstyle\mathcal{N}}(z) + \distcov_{\boiteration,\cusvector{\inpvars}}^{1/2}\,\probabilitypdf_{\scriptscriptstyle\mathcal{N}}(z)$ \\
        \addlinespace[0.35em]
        PI                   & $\probabilitycdf_{\scriptscriptstyle\mathcal{N}}(z)$                                                                                                                                                                                              \\
        \addlinespace[0.35em]
        UCB                  & $\distmean_{\boiteration,\cusvector{\inpvars}} + \sqrt{\kappa_{\mathrm{UCB}}}\,\distcov_{\boiteration,\cusvector{\inpvars}}^{1/2}$                                                                                                                \\
        \bottomrule
    \end{tabularx}
    \endgroup
\end{table}

Following~\citet{wilson2017reparameterization}, the Monte Carlo form of EI can be written as
\begin{equation}
    \label{eq:ei_mc}
    \acqfdistfunctional_{\mathrm{EI}}(\cusvector[bm]{\manifoldparameters}_{\boiteration, \cusvector{\inpvars}},\,
    \cusvector[bm]{\MCbasesamples};\,\cusvector[bm]{\acqfparams})
    = \mathrm{ReLU}\!\left(\distmean_{\boiteration, \cusvector{\inpvars}}
    + \distcov_{\boiteration, \cusvector{\inpvars}}^{1/2}\cusvector[bm]{\MCbasesamples}
    - \obsouts^{\incumbentobs}\right).
\end{equation}

For parallel BO~\citep{wang2020parallel} at batch input $\batchinpvars=\{\cusvector{\inpvars}_1, \ldots, \cusvector{\inpvars}_{\batchsize}\}$, we write the reparameterized sample as $\cusvector[bm]{\obsouts}=\cusvector[bm]{\distmean}_{\boiteration,\batchinpvars}+\cusvector[bm]{\choleskyfactor}_{\boiteration,\batchinpvars}\cusvector[bm]{\MCbasesamples}$, with $\cusvector[bm]{\choleskyfactor}_{\boiteration,\batchinpvars}=\mathrm{chol}(\cusvector{\distcov}_{\boiteration,\batchinpvars})$ and $\cusvector[bm]{\MCbasesamples}\sim\mathcal{N}(0,\bm{I})$. Table~\ref{tab:acqf} uses this batch notation because it matches the BoTorch implementation. In the scalar sequential setting of the main text, $\batchsize=1$ and these expressions reduce to the usual one-point acquisitions.

For MC-UCB in the scalar case, the reparameterized form uses the identity $\mathbb{E}|Z|=\sqrt{2/\pi}$ for $Z\sim\mathcal{N}(0,1)$, so $\sqrt{\kappa_{\mathrm{UCB}}\pi/2}\,\distcov_{\boiteration,\cusvector{\inpvars}}^{1/2}|Z|$ has expectation $\sqrt{\kappa_{\mathrm{UCB}}}\,\distcov_{\boiteration,\cusvector{\inpvars}}^{1/2}$.

\begin{table}[ht]
    \caption{Utility $\acqfutility$ and integrand $\acqfdistfunctional(\cdot,\cusvector[bm]{\MCbasesamples};\cdot)$ inside $\mathbb{E}_{\cusvector[bm]{\MCbasesamples}}$ for reparameterized acquisitions under~\eqref{eq:acqf_expectation} for a Gaussian predictive~\citep{wilson2017reparameterization,wang2020parallel}. $\mathbf{1}^{+/-}$ denote the right/left-continuous Heaviside step functions; $\mathrm{ReLU}(u)=\max(u,0)$; $\mathrm{sigmoid}(u)=(1+e^{-u})^{-1}$; $\mathrm{H}$ is Shannon entropy. EI (Expected Improvement) and PI (Probability of Improvement) use incumbent threshold $\obsouts^{\incumbentobs}$; PI uses temperature $\tau>0$; the PI row is a sigmoid-smoothed surrogate for Monte Carlo gradients and is not the analytic PI functional $\probabilitycdf_{\scriptscriptstyle\mathcal{N}}(z)$. UCB uses exploration weight $\kappa_{\mathrm{UCB}}>0$. For ES (Entropy Search), subscripts $a$ and $b$ mark the query set and discretization points; $\cusvector[bm]{\distmean}_{b\mid a}$ and $\cusvector[bm]{\choleskyfactor}_{b\mid a}$ are the conditional posterior mean and Cholesky factor at $b$ given an outcome at $a$. SR (Simple Regret) does not use extra parameters.}
    \vspace{0.5em}
    \label{tab:acqf}
    \centering
    \small
    \begingroup
    \renewcommand{\tabularxcolumn}[1]{>{\centering\arraybackslash}m{#1}}
    \setlength{\tabcolsep}{5pt}
    \renewcommand{\arraystretch}{1.45}
    \begin{tabularx}{\linewidth}{@{} >{\centering\arraybackslash}m{1.75cm} X X @{}}
        \toprule
        \textbf{Acquisition} & \textbf{Utility $\acqfutility(\obsouts; \cusvector[bm]{\manifoldparameters}_{\boiteration, \cusvector{\inpvars}},\cusvector[bm]{\acqfparams})$}                                                            & \textbf{Reparameterized form $\acqfdistfunctional(\cusvector[bm]{\manifoldparameters}_{\boiteration, \cusvector{\inpvars}}, \cusvector[bm]{\MCbasesamples}; \cusvector[bm]{\acqfparams})$}                                                        \\
        \midrule\addlinespace[0.15em]
        EI                   & $\max\!\left(\mathrm{ReLU}\!\left(\cusvector[bm]{\obsouts}-\obsouts^{\incumbentobs}\right)\right)$                                                                                                         & $\max\!\left(\mathrm{ReLU}\!\left(\cusvector[bm]{\distmean}_{\boiteration,\batchinpvars}+\cusvector[bm]{\choleskyfactor}_{\boiteration,\batchinpvars}\cusvector[bm]{\MCbasesamples}-\obsouts^{\incumbentobs}\right)\right)$                       \\
        \addlinespace[0.35em]
        PI                   & $\max\!\left(\mathbf{1}^{-}\!\left(\cusvector[bm]{\obsouts}-\obsouts^{\incumbentobs}\right)\right)$                                                                                                        & $\max\!\left(\mathrm{sigmoid}\!\left(\frac{\cusvector[bm]{\distmean}_{\boiteration,\batchinpvars}+\cusvector[bm]{\choleskyfactor}_{\boiteration,\batchinpvars}\cusvector[bm]{\MCbasesamples}-\obsouts^{\incumbentobs}}{\tau}\right)\right)$       \\
        \addlinespace[0.35em]
        UCB                  & $\max\!\left(\cusvector[bm]{\distmean}_{\boiteration,\batchinpvars}+\sqrt{\kappa_{\mathrm{UCB}}\pi/2}\,\bigl|\cusvector[bm]{\obsouts}-\cusvector[bm]{\distmean}_{\boiteration,\batchinpvars}\bigr|\right)$ & $\max\!\left(\cusvector[bm]{\distmean}_{\boiteration,\batchinpvars}+\sqrt{\kappa_{\mathrm{UCB}}\pi/2}\,\bigl|\cusvector[bm]{\choleskyfactor}_{\boiteration,\batchinpvars}\cusvector[bm]{\MCbasesamples}\bigr|\right)$                             \\
        \addlinespace[0.35em]
        SR                   & $\max(\cusvector[bm]{\obsouts})$                                                                                                                                                                           & $\max\!\left(\cusvector[bm]{\distmean}_{\boiteration,\batchinpvars}+\cusvector[bm]{\choleskyfactor}_{\boiteration,\batchinpvars}\cusvector[bm]{\MCbasesamples}\right)$                                                                            \\
        \addlinespace[0.35em]
        ES                   & $-\mathrm{H}\!\left(\mathbb{E}_{\cusvector[bm]{\obsouts}_b\mid\cusvector[bm]{\obsouts}_a}\!\left[\mathbf{1}^{+}\!\left(\cusvector[bm]{\obsouts}_b-\max(\cusvector[bm]{\obsouts}_b)\right)\right]\right)$   & $-\mathrm{H}\!\left(\mathbb{E}_{\cusvector[bm]{\MCbasesamples}_b}\!\left[\mathrm{softmax}\!\left(\frac{\cusvector[bm]{\distmean}_{b\mid a}+\cusvector[bm]{\choleskyfactor}_{b\mid a}\cusvector[bm]{\MCbasesamples}_b}{\tau}\right)\right]\right)$ \\
        \bottomrule
    \end{tabularx}
    \endgroup
\end{table}

\subsection{Trust-Region Bayesian Optimization (TuRBO)}
\label{apx:turbo}
TuRBO replaces a single global search step with local BO updates inside the trust region~\citep{eriksson2019scalable}. At each iteration $\boiteration$, a trust region (TR), $\inpvarsset^{\mathcal{TR}}_{\boiteration}\subseteq\inpvarsset$, is defined by its center $\cusvector{\inpvars}_{\boiteration, c}$ and length $\TRlength_{\boiteration}$. The TR is an axis-aligned box for which the dimension-wise length is calculated using the GP surrogate $\surrogatemodel_{\boiteration}$ with Automatic Relevance Determination (ARD) kernel. The next candidate is selected by maximizing an AF with bounds as
\[
    \cusvector{\inpvars}_{\boiteration+1}
    =
    \arg\max_{\cusvector{\inpvars}\in\inpvarsset^{\mathcal{TR}}_{\boiteration}}
    \acqf(\cusvector{\inpvars};\observeddata_{\boiteration}).
\]
The TR length $\TRlength_{\boiteration}$ grows or shrinks using two counters. $\TRsuccesscount$ tracks the number of consecutive improvements over the incumbent $\obsouts^{\incumbentobs}$, and $\TRfailurecount$ tracks the number of consecutive failures. If $\TRsuccesscount$ reaches the threshold $\TRsuccesscount$, the region expands; if $\TRfailurecount$ reaches $\TRfailurecount$, it contracts; both counters reset on any length change as
\[
    \TRlength_{\boiteration+1}
    =
    \begin{cases}
        \min(2\TRlength_{\boiteration}, \TRlength_{\max}), & \text{if success count reaches }\TRsuccesscount, \\
        \tfrac{1}{2}\TRlength_{\boiteration},              & \text{if failure count reaches }\TRfailurecount, \\
        \TRlength_{\boiteration},                          & \text{otherwise.}
    \end{cases}
\]

\begin{algorithm}[t]
    \caption{TuRBO TR weights calculation}
    \label{alg:turbo_tr_length}
    \KwIn{Center $\cusvector{\inpvars}_{\boiteration, c}$, data $\observeddata_{\boiteration}$}
    \KwOut{TR length weights $\cusvector[bm]{\TRlengthweight}_{\boiteration}$, trained surrogate GP $\surrogatemodel_{\boiteration}$}
    Fit GP on $\observeddata_{\boiteration}$ and extract ARD lengthscales $\lengthscale_{\boiteration,j}$ for all $j=1,\ldots,\dims{\inpvars}$\;
    $\TRlengthweight_{\boiteration,j} \leftarrow \lengthscale_{\boiteration,j}$ \quad for each $j=1,\ldots,\dims{\inpvars}$\;
    \Return $\cusvector[bm]{\TRlengthweight}_{\boiteration}= ( \TRlengthweight_{\boiteration,j} )_{j=1,\ldots,\dims{\inpvars}}$ and $\surrogatemodel_{\boiteration}$\;
\end{algorithm}

\begin{algorithm}[t]
    \caption{Trust-region length control and next candidate selection}
    \label{alg:turbo_candidate}
    \KwIn{TR-weight routine \texttt{compute\_tr\_weights}, budget $\Boiteration$, thresholds $\TRsuccesscount^{\mathrm{max}},\TRfailurecount^{\mathrm{max}}$, bounds $\TRlength_{\min},\TRlength_{\max}$ and initial length $\TRlength_{\mathrm{init}}$}
    \KwOut{Best input $\cusvector{\inpvars}^{\incumbentobs}$ and value $\obsouts^{\incumbentobs}$}
    Initialize $\cusvector{\inpvars}_{0,c}$, $\TRlength_{0}\leftarrow\TRlength_{\mathrm{init}}$, $\TRsuccesscount\leftarrow 0$, $\TRfailurecount\leftarrow 0$, $\observeddata_{0}$, incumbent $\obsouts^{\incumbentobs}$\;
    \For{$\boiteration=0,\ldots,\Boiteration-1$}{
    Get TR length weights $\TRlengthweight_j$ and $\surrogatemodel_{\boiteration}$ from \texttt{compute\_tr\_weights} with $(\cusvector{\inpvars}_{\boiteration, c},\observeddata_{\boiteration})$\;
    \textbf{Volume normalization}\;
    $\TRlengthweight_j \leftarrow \TRlengthweight_j \big/\bigl(\prod_{k=1}^{\dims{\inpvars}}\TRlengthweight_k\bigr)^{1/\dims{\inpvars}}$ \quad for each $j=1,\ldots,\dims{\inpvars}$\;
    \textbf{TR construction using upper and lower limits of $\inpvarsset$}\;
    \For{$j=1,\ldots,\dims{\inpvars}$}{
        $\alpha_j \leftarrow \max\!\Bigl(\inpvars_j^c - \tfrac{\TRlengthweight_j\TRlength_{\boiteration}}{2},\;\inpvars_j^{\min}\Bigr)$\;
        $\beta_j  \leftarrow \min\!\Bigl(\inpvars_j^c + \tfrac{\TRlengthweight_j\TRlength_{\boiteration}}{2},\;\inpvars_j^{\max}\Bigr)$\;
    }
    Define $\inpvarsset^{\mathcal{TR}}_{\boiteration}$ by the dimension-wise limits $[\alpha_j,\,\beta_j]$ for all $j=1,\ldots,\dims{\inpvars}$\;
    Select next candidate $\cusvector{\inpvars}_{\boiteration+1}\leftarrow\arg\max_{\cusvector{\inpvars}\in\inpvarsset^{\mathcal{TR}}_{\boiteration}}\acqf(\cusvector{\inpvars};\observeddata_{\boiteration})$\;
    Evaluate $\obsouts_{\boiteration+1}=\processfunction_{\mathrm{true}}(\cusvector{\inpvars}_{\boiteration+1})$ and update $\observeddata_{\boiteration+1}$\;
    \eIf{$\obsouts_{\boiteration+1}>\obsouts^{\incumbentobs}$}{
        $\TRsuccesscount\leftarrow\TRsuccesscount+1$; $\TRfailurecount\leftarrow 0$\;
        Update $\obsouts^{\incumbentobs}$\;
    }{
        $\TRsuccesscount\leftarrow 0$; $\TRfailurecount\leftarrow\TRfailurecount+1$\;
    }
    $\TRlength_{\boiteration+1}\leftarrow\begin{cases}
            \min(2\TRlength_{\boiteration},\TRlength_{\max}), & \TRsuccesscount=\TRsuccesscount^{\mathrm{max}} \\
            \tfrac{1}{2}\TRlength_{\boiteration},             & \TRfailurecount=\TRfailurecount^{\mathrm{max}} \\
            \TRlength_{\boiteration},                         & \text{otherwise}
        \end{cases}$\;
    Reset $\TRsuccesscount,\TRfailurecount\leftarrow 0$ if $\TRlength$ changed\;
    \If{$\TRlength_{\boiteration+1}<\TRlength_{\min}$}{
        Restart around a new center; $\TRlength_{\boiteration+1}\leftarrow\TRlength_{\mathrm{init}}$\;
    }
    }
    \Return $\cusvector{\inpvars}_{\boiteration+1}^{\incumbentobs}$ and $\obsouts^{\incumbentobs}$\;
\end{algorithm}

When $\TRlength_{\boiteration}<\TRlength_{\min}$ at any iteration, TR is said to have collapsed. It is then restarted around a new center location. \citet{namura2025regional} provide a Regional Expected Improvement (REI) acquisition function that can be used to select the next TR center location when the TR collapses. \citet{eriksson2019scalable} also present a multi-region variant with multiple trust-region centers; however, we omit it here.
\newpage\section{Implementation Details}
\label{apx:implementation_details}
For the experiments with a GP using an SE kernel, we use the Dimensionally Scaled LogNormal Priors (DSP) with mean $\sqrt{2} + \log(\dims{\inpvars}) * 0.5$ and standard deviation $\sqrt{3}$. DSP~\citep{hvarfner2024vanilla} uses dimensionally scaled LogNormal priors on the lengthscales and optimizes AF over the full input domain $\inpvarsset$, which is also the default implementation for SingleTaskGP in BoTorch v0.16.1~\citep{balandat2020botorch}. For the experiments with the GP using an IBNN kernel, we keep the BoTorch default depth $3$.

TuRBO-REI combines TuRBO's lengthscale-weighted trust region~\citep{eriksson2019scalable} with Regional Expected Improvement for choosing the trust-region center when the TR collapses~\citep{namura2025regional}. \citet{namura2025regional} showed that REI improved TuRBO by selecting a new center in a region with high expected utility. BOUNCE~\citep{papenmeier2023bounce} uses an adaptive subspace decomposition and can optimize mixed and combinatorial structures as well. FITR keeps the same trust-region scaling logic as TuRBO-REI (Algorithm~\ref{alg:turbo_candidate}) and also uses REI center selection at trust-region collapse. It replaces ARD lengthscale weights by the regularized diagonal pullback-Fisher weights from Algorithm~\ref{alg:fitr_weights}. Section~\ref{sec:fitr} makes clear that the main method in this paper is the diagonal axis-aligned approximation; we also test a full-tensor ablation (FullFITR). FullFITR uses whitening for Sobol sampling and a simple boundary-clipping strategy instead of polytope MCMC. Due to its additional computational cost, we evaluate FullFITR only on a subset of problems.

DSP, TuRBO-REI, FITR, and FullFITR are implemented in BoTorch~\citep{balandat2020botorch} and use LogEI~\citep{ament2023unexpected}. Inputs are normalized to $[0,1]^{\dims{\inpvars}}$, and outputs are standardized before fitting the GP. For each experiment, we start from $30$ randomly sampled observed points. We use sequential evaluations ($\batchsize=1$). The SE-kernel experiments are repeated over $11$ independently seeded trials, whereas the IBNN experiments are repeated over $5$ independently seeded trials. AF is optimized using L-BFGS with $5$ restarts and $256$ raw samples. For FITR, the diagonal entries of the pullback Fisher tensor $\pullbackFIM_{\boiteration}(\cusvector{\inpvars}_{\boiteration,c})$ are estimated with $\MCsamples=256$ MC samples using~\eqref{eq:fitr_diag_estimate}. These settings are fixed across tasks rather than tuned per benchmark.

The damping parameter is chosen automatically as $\TRregularization_{\boiteration}=\dims{\inpvars}^{-1}\sum_{j=1}^{\dims{\inpvars}}\widehat{\pullbackFIM}_{\boiteration,j}+10^{-12}$. The resulting weights are volume-normalized before constructing the trust region. Using Algorithm~\ref{alg:turbo_candidate}, the trust-region length $\TRlength_{\boiteration}$ controls the search volume while the pullback tensor controls the local anisotropy. The trust-region length follows the standard TuRBO rule: it doubles after $10$ consecutive improvements, halves after $\min\{\lceil\max(\frac{4}{\batchsize},\frac{\dims{\inpvars}}{\batchsize})\rceil,20\}$ consecutive failures, and restarts when the length drops below $0.5^7$. FullFITR uses the full regularized pullback Fisher tensor and optimizes LogEI in whitened coordinates. Candidate points are mapped back to the input cube and clipped at the boundary, which keeps the implementation compatible with standard BoTorch box optimization while exposing the additional cost of the full tensor. The experiments were run on an Nvidia GPU Tesla V100-SXM2-32GB.

\subsection{Trust-region diagnostics}
\label{sec:fitr_log_diagnostics}

\begin{figure*}[h]
    \centering
    \includegraphics[width=0.6\textwidth]{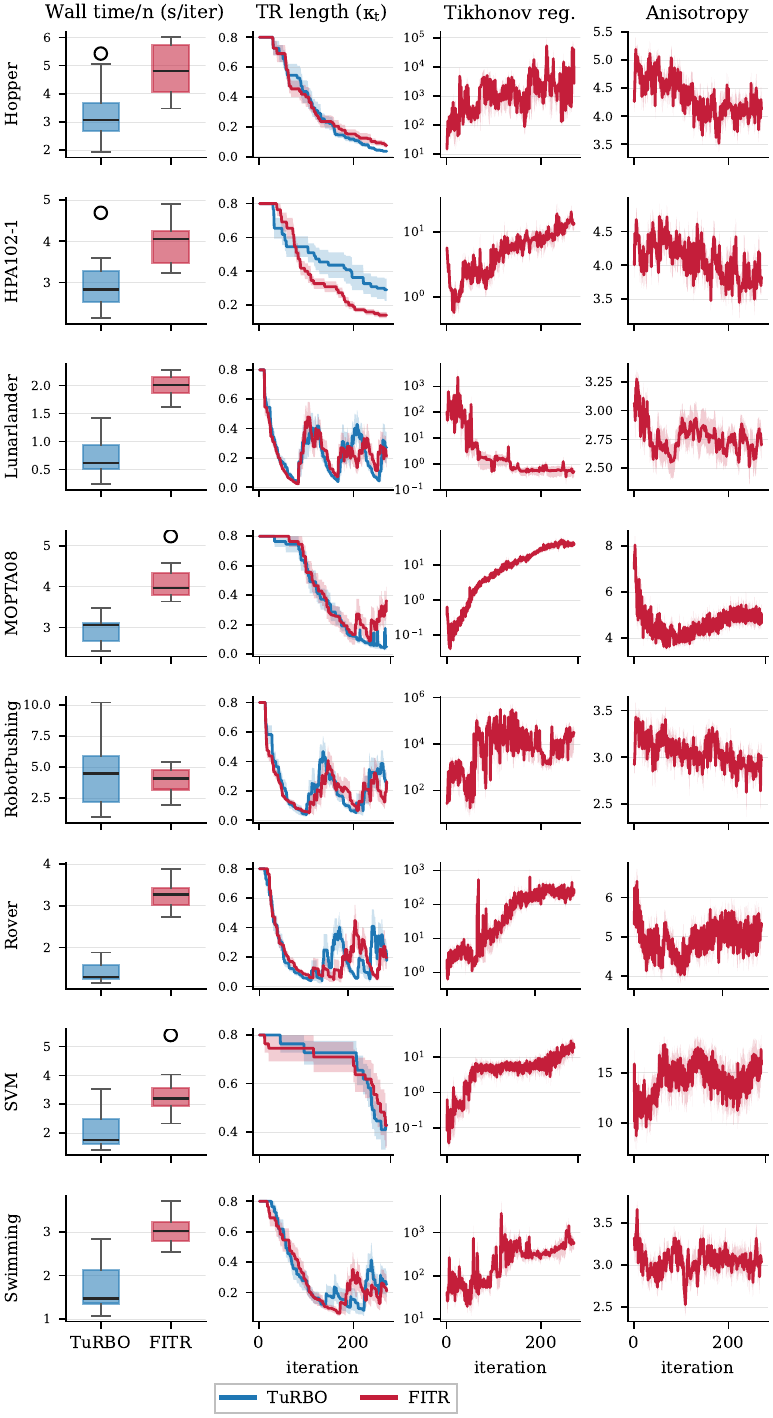}
    \caption{Per-problem diagnostics for TuRBO and FITR on the GP-SE benchmark suite. The first column reports the average time for getting the next candidate in BO iterations. The second column shows the trust-region length $\kappa_t$. The last two columns show the FITR-specific Tikhonov regularization and anisotropy diagnostics. Solid lines show means over the $11$ seeded runs and shaded bands show standard error.}
    \label{fig:fitr_log_diagnostics}
\end{figure*}

Figure~\ref{fig:fitr_log_diagnostics} complements the objective-value plots with internal trust-region diagnostics. The trust-region length traces for FITR and TuRBO-REI remain qualitatively similar across problems, so the performance differences in Figure~\ref{fig:bo_results} are not explained by a different restart or length-adaptation schedule. For the diagonal FITR implementation, the logged anisotropy is the ratio between the largest and smallest volume-normalized coordinate weights used to shape the axis-aligned trust region, so values close to $1$ indicate a nearly isotropic local region and larger values indicate stronger directional stretching. The diagnostics are shown using the experiment logs whose results are reported in Figure~\ref{fig:bo_results}.
\newpage\section{Proofs}
\label{apx:proofs}

\subsection{Interchange Conditions for Gaussian Reparameterizations}
\label{apx:acqf_regularity}

Differentiating under the expectation in Proposition~\ref{prop:universal_bound} requires a valid interchange of gradient and expectation. We use standard reparameterization conditions from~\citet{wilson2018maximizing}. The lemma below provides a sufficient condition for scalar Gaussian reparameterizations, where the AF integrand depends on the sampled output through the predictive distribution parameters $\cusvector[bm]{\manifoldparameters}$. This is the case used later for MC-EI and the smoothed MC-PI surrogate.
% Other reparameterized acquisitions, such as MC-UCB, need the same pathwise differentiability argument but do not fit the exact template below.

\begin{lemma}[Generic Gaussian Reparameterization Condition]
    \label{lem:acqf_regularity}
    Let Assumption~\ref{ass:surrogate_regularity} hold and let
    \[
        \acqfdistfunctional\!\left(\cusvector[bm]{\manifoldparameters}_{\boiteration,\cusvector{\inpvars}},
        \cusvector[bm]{\MCbasesamples};\cusvector[bm]{\acqfparams}\right)
        = \acqfutility\!\left(\distmean_{\boiteration,\cusvector{\inpvars}}
        + \distcov_{\boiteration,\cusvector{\inpvars}}^{1/2}\cusvector[bm]{\MCbasesamples};\,
        \cusvector[bm]{\acqfparams}\right)
    \]
    be a scalar Gaussian reparameterized acquisition,
    with $\cusvector[bm]{\MCbasesamples} \sim \mathcal{N}(0, I)$ independent of $\cusvector{\inpvars}$~\citep{wilson2018maximizing,wang2020parallel}.
    We assume
    \begin{itemize}
        \item[\textup{(a)}] $\acqfdistfunctional(\cdot, \cusvector[bm]{\MCbasesamples}; \cusvector[bm]{\acqfparams})$ is locally Lipschitz on $\manifold$ and differentiable almost everywhere for almost every $\cusvector[bm]{\MCbasesamples}$;
        \item[\textup{(b)}] $\acqfutility(\obsouts;\cusvector[bm]{\acqfparams})$ is globally Lipschitz in $\obsouts$.
    \end{itemize}
    Then
    $\mathbb{E}_{\cusvector[bm]{\MCbasesamples}}\!\left[\left\|
        \nabla_{\cusvector[bm]{\manifoldparameters}_{\boiteration,\cusvector{\inpvars}}}
        \acqfdistfunctional\!\left(\cusvector[bm]{\manifoldparameters}_{\boiteration,\cusvector{\inpvars}},
        \cusvector[bm]{\MCbasesamples};\cusvector[bm]{\acqfparams}\right)
        \right\|^2\right] < \infty$
    for all $\cusvector[bm]{\manifoldparameters}_{\boiteration,\cusvector{\inpvars}} \in \manifold$.
\end{lemma}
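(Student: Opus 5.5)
The plan is a direct chain-rule computation in the two scalar Gaussian coordinates $\cusvector[bm]{\manifoldparameters}_{\boiteration,\cusvector{\inpvars}}=(\distmean_{\boiteration,\cusvector{\inpvars}},\distcov_{\boiteration,\cusvector{\inpvars}})$. We then bound each partial derivative by the global Lipschitz constant of $\acqfutility$ times a polynomial in $\cusvector[bm]{\MCbasesamples}$, and close with the finiteness of Gaussian moments.

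Write $u(\cusvector[bm]{\manifoldparameters},\cusvector[bm]{\MCbasesamples})=\distmean+\distcov^{1/2}\cusvector[bm]{\MCbasesamples}$, so that $\acqfdistfunctional=\acqfutility(u;\cusvector[bm]{\acqfparams})$. By (b), $\acqfutility(\cdot;\cusvector[bm]{\acqfparams})$ is Lipschitz with some constant $L$. By Rademacher's theorem it is therefore differentiable Lebesgue-a.e., with $|\acqfutility'|\le L$ wherever the derivative exists.

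For fixed $\cusvector[bm]{\manifoldparameters}$ with $\distcov>0$, the map $\cusvector[bm]{\MCbasesamples}\mapsto u$ is affine with nonzero slope. Hence the set of $\cusvector[bm]{\MCbasesamples}$ for which $u$ lands in the Lebesgue-null nondifferentiability set of $\acqfutility$ is itself null under $\mathcal{N}(0,1)$. Positivity $\distcov>0$ holds on $\manifold=\mathbb{R}\times\mathbb{R}_{+}$ and is also implied by $\fishermetric_\manifold\succ0$ in Assumption~\ref{ass:surrogate_regularity}. Consequently, for almost every $\cusvector[bm]{\MCbasesamples}$ the gradient exists at the given point, which is consistent with (a). Where (a) is needed is in ensuring the a.e.-defined gradient is the object entering the expectation.

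At such points the chain rule gives
\[
\partial_{\distmean}\acqfdistfunctional=\acqfutility'(u),\qquad \partial_{\distcov}\acqfdistfunctional=\acqfutility'(u)\,\frac{\cusvector[bm]{\MCbasesamples}}{2\distcov^{1/2}},
\]
and therefore
\[
\|\nabla_{\cusvector[bm]{\manifoldparameters}}\acqfdistfunctional\|^2\le L^2\Bigl(1+\frac{\|\cusvector[bm]{\MCbasesamples}\|^2}{4\distcov}\Bigr).
\]
Taking expectations and using $\mathbb{E}\|\cusvector[bm]{\MCbasesamples}\|^2<\infty$ for a standard Gaussian yields the bound $L^2\bigl(1+\mathbb{E}\|\cusvector[bm]{\MCbasesamples}\|^2/(4\distcov)\bigr)<\infty$, which is finite for every point of $\manifold$.

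The main obstacle is the measure-theoretic step of making precise that the pathwise gradient exists for almost every $\cusvector[bm]{\MCbasesamples}$ at the fixed $\cusvector[bm]{\manifoldparameters}$, rather than only a.e. in $\cusvector[bm]{\manifoldparameters}$. The affine-pushforward argument above handles this, and it relies essentially on $\distcov>0$. A secondary caveat is that the bound degrades like $1/\distcov$. So the statement holds pointwise but is not uniform near $\distcov\to0$, which is why the strictly positive-variance regime of Assumption~\ref{ass:surrogate_regularity} is required.
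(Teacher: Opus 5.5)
Your proposal is correct and follows the paper's proof. Both apply the chain rule through $\distmean+\distcov^{1/2}\cusvector[bm]{\MCbasesamples}$, bound $|\nabla_{\obsouts}\acqfutility|\le L$ using (b), bound the inner gradient by a term affine in $\cusvector[bm]{\MCbasesamples}$ (your explicit constants are $A=1$ and $B=1/(2\distcov^{1/2})$), and conclude from the finite Gaussian second moment. Your Rademacher and affine-pushforward argument for a.e.\ differentiability at the fixed $\cusvector[bm]{\manifoldparameters}$ is a useful sharpening of what the paper simply takes from assumption (a), but the route is the same.
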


\begin{proof}
    Assumption~\textup{(a)} is the pathwise differentiability condition
    used in reparameterized acquisition optimization: the integrand admits gradients almost everywhere, which is sufficient for piecewise-smooth utilities such as MC-EI. For square-integrability, let $L_a$ be the
    Lipschitz constant of $\acqfutility$ from assumption~\textup{(b)}.
    The chain rule gives, for almost every $\cusvector[bm]{\MCbasesamples}$,
    \begin{equation}
        \label{eq:reparameterization_gradient}
        \nabla_{\cusvector[bm]{\manifoldparameters}_{\boiteration,\cusvector{\inpvars}}}
        \acqfdistfunctional
        = \nabla_{\obsouts}\acqfutility\!\left(
        \distmean_{\boiteration,\cusvector{\inpvars}}
        + \distcov_{\boiteration,\cusvector{\inpvars}}^{1/2}\cusvector[bm]{\MCbasesamples};\,
        \cusvector[bm]{\acqfparams}\right)
        \cdot
        \nabla_{\cusvector[bm]{\manifoldparameters}_{\boiteration,\cusvector{\inpvars}}}
        \!\left(\distmean_{\boiteration,\cusvector{\inpvars}}
        + \distcov_{\boiteration,\cusvector{\inpvars}}^{1/2}\cusvector[bm]{\MCbasesamples}\right).
    \end{equation}
    Since $\acqfutility$ is $L_a$-Lipschitz in $\obsouts$, the first factor satisfies
    $\|\nabla_{\obsouts}\acqfutility\| \le L_a$.
    The second factor is affine in $\cusvector[bm]{\MCbasesamples}$, so
    $\|\nabla_{\cusvector[bm]{\manifoldparameters}_{\boiteration,\cusvector{\inpvars}}}(\distmean_{\boiteration,\cusvector{\inpvars}} + \distcov_{\boiteration,\cusvector{\inpvars}}^{1/2}\cusvector[bm]{\MCbasesamples})\|
        \le A + B\|\cusvector[bm]{\MCbasesamples}\|$
    for finite constants $A, B$ at a fixed $\cusvector[bm]{\manifoldparameters}_{\boiteration,\cusvector{\inpvars}}$. $\mathrm{chol}(\cdot)$ is smooth on the positive-definite cone, since $\distcov_{\boiteration,\cusvector{\inpvars}}\succ 0$ by Assumption~\ref{ass:surrogate_regularity}(ii)~\citep{smith1995differentiation}.
    Therefore
    $\|\nabla_{\cusvector[bm]{\manifoldparameters}_{\boiteration,\cusvector{\inpvars}}}\acqfdistfunctional\|^2
        \le L_a^2(A + B\|\cusvector[bm]{\MCbasesamples}\|)^2$,
    and since $\cusvector[bm]{\MCbasesamples}\sim\mathcal{N}(0,I)$ has finite second moment the
    expectation is finite.
\end{proof}

\begin{remark}[Necessity of $\cusvector{\distcov} \succ 0$]
    \label{rem:variance_singularity}
    % The condition $\cusvector{\distcov}_{\boiteration} \succ 0$ keeps the Cholesky map smooth. 
    If the predictive variance degenerates, derivatives of the square-root factor can blow up, and the bound above would not hold. In the scalar case, $\nabla \distcov^{1/2} = (\nabla \distcov)/(2\distcov^{1/2})$ is singular as $\distcov \to 0$. A strictly positive noise level rules out this singularity for GP regression. Similar reasoning applies to the batch BO setting; however, we limit our discussion to the scalar case for simplicity in this work.
\end{remark}

\begin{remark}[LogEI in the experimental regime]
    \label{rem:logei_regularity}
    The experiments use the numerically stabilized LogEI implementation from BoTorch~\citep{ament2023unexpected}. In the scalar Gaussian setting with strictly positive predictive variance, this implementation is smooth in the predictive parameters on any compact subset of $\mathbb{R}\times(0,\infty)$. The local interchange argument above therefore applies in the positive-variance regime considered in the experiments. We use EI for the closed-form sensitivity calculations in Appendix~\ref{apx:proof_gaussian}, while LogEI is used in the implementation for numerical stability.
\end{remark}

\subsection{Proof of Proposition~\ref{prop:universal_bound} (Acquisition Gradient Bound)}
\label{apx:proof_universal}

Let $\cusvector[bm]{\MCbasesamples} \sim \mathcal{N}(0,I)$ be the independent base samples in the reparameterized acquisition from Lemma~\ref{lem:acqf_regularity}.
$\acqf(\cusvector{\inpvars}) = \mathbb{E}_{\cusvector[bm]{\MCbasesamples}}[\acqfdistfunctional(\cusvector[bm]{\manifoldparameters}_{\boiteration,\cusvector{\inpvars}}, \cusvector[bm]{\MCbasesamples}; \cusvector[bm]{\acqfparams})]$,
where $\cusvector[bm]{\manifoldparameters}_{\boiteration,\cusvector{\inpvars}} = (\distmean_{\boiteration, \cusvector{\inpvars}},\distcov_{\boiteration,\cusvector{\inpvars}})$
are the GP posterior mean and variance at $\cusvector{\inpvars}$.
$\jacobianmap_{\boiteration}(\cusvector{\inpvars}) = \partial\cusvector[bm]{\manifoldparameters}_{\boiteration,\cusvector{\inpvars}}/\partial\cusvector{\inpvars}
    \in \mathbb{R}^{\dims{\manifoldparameters} \times \dims{\inpvars}}$ is the Jacobian of the posterior map.

\par\smallskip\noindent\textbf{Step 1 (Chain rule and interchange).}
By the assumed interchange of gradient and expectation in Proposition~\ref{prop:universal_bound}, justified for scalar Gaussian reparameterizations by Lemma~\ref{lem:acqf_regularity}, the reparameterized acquisition satisfies
\begin{equation}
    \label{eq:chain_rule}
    \nabla_{\cusvector{\inpvars}}\,\acqf(\cusvector{\inpvars})
    = \jacobianmap_{\boiteration}(\cusvector{\inpvars})^\top\,
    \mathbb{E}_{\cusvector[bm]{\MCbasesamples}}\!\left[\nabla_{\cusvector[bm]
    {\manifoldparameters}_{\boiteration,\cusvector{\inpvars}}}\acqfdistfunctional\!\left(\manifoldmap_
        {\boiteration}(\cusvector{\inpvars}), \cusvector[bm]{\MCbasesamples}; \cusvector[bm]{\acqfparams}\right)
    \right],
\end{equation}
which also aligns with the discussion around Equation 5 of Section 3.2 in~\citet{wilson2018maximizing}. For ease of notation, we write $\mathbb{E}_{\cusvector[bm]{\MCbasesamples}}\!\left[\nabla_{\cusvector[bm]
    {\manifoldparameters}_{\boiteration,\cusvector{\inpvars}}}\acqfdistfunctional\!\left(\manifoldmap_
        {\boiteration}(\cusvector{\inpvars}), \cusvector[bm]{\MCbasesamples}; \cusvector[bm]{\acqfparams}\right) \right]$ as $\mathbb{E}_{\cusvector[bm]{\MCbasesamples}}\!\left[\nabla_{\cusvector[bm]{\manifoldparameters}}\acqfdistfunctional\!\left(\cdot,\,\cusvector[bm]{\MCbasesamples}\right)\right]$ in this proof, where the argument $\cdot$ denotes $\manifoldmap_{\boiteration}(\cusvector{\inpvars})$.

\par\smallskip\noindent\textbf{Step 2 (Fisher metric insertion).}
Using Assumption~\ref{ass:surrogate_regularity}(ii), let $\cusvector{\pullbackFIM}_{\manifold}(\cusvector{\manifoldparameters}_{\boiteration,\cusvector{\inpvars}}) := \fishermetric_{\manifold}(\manifoldmap_{\boiteration}(\cusvector{\inpvars})) \succ 0$ . Using the identity $I = \cusvector{\pullbackFIM}_{\manifold}(\cusvector{\manifoldparameters}_{\boiteration,\cusvector{\inpvars}})^{1/2}\cusvector{\pullbackFIM}_{\manifold}(\cusvector{\manifoldparameters}_{\boiteration,\cusvector{\inpvars}})^{-1/2}$ from~\citet{amari2000methods}, we can decompose the gradient into a geometry-dependent factor $A$ and an acquisition-dependent factor $u$:
\begin{equation}
    \label{eq:fisher_metric_insertion}
    \nabla_{\cusvector{\inpvars}}\,\acqf(\cusvector{\inpvars})
    = \underbrace{\left(\cusvector{\pullbackFIM}_{\manifold}(\cusvector{\manifoldparameters}_{\boiteration,\cusvector{\inpvars}})^{1/2}\jacobianmap_{\boiteration}(\cusvector{\inpvars})\right)^\top}_{=:\,A^\top}\, \underbrace{\cusvector{\pullbackFIM}_{\manifold}(\cusvector{\manifoldparameters}_{\boiteration,\cusvector{\inpvars}})^{-1/2}\mathbb{E}_{\cusvector[bm]{\MCbasesamples}}\!\left[\nabla_{\cusvector[bm]{\manifoldparameters}}\acqfdistfunctional\!\left(\cdot,\,\cusvector[bm]{\MCbasesamples}\right)\right]}_{=:\,u},
\end{equation}
\par\smallskip\noindent\textbf{Step 3 (Cauchy--Schwarz).}
For any matrix $A \in \mathbb{R}^{p \times q}$ and vector $u \in \mathbb{R}^p$,
$\|A^\top u\|_2^2 \leq \|A\|_F^2 \|u\|^2$. Applying it to the decomposition from Step~2 gives
\begin{equation}
    \label{eq:cauchy_schwarz}
    \left\|\nabla_{\cusvector{\inpvars}}\,\acqf(\cusvector{\inpvars})\right\|_2\leq \sqrt{\underbrace{\left\|\cusvector{\pullbackFIM}_{\manifold}(\cusvector{\manifoldparameters}_{\boiteration,\cusvector{\inpvars}})^{1/2}\jacobianmap_{\boiteration}(\cusvector{\inpvars})\right\|_F^2}_{=\,\mathrm{tr}(\cusvector{\pullbackFIM}_{\boiteration}(\cusvector{\inpvars}))}
    \cdot \left\|\cusvector{\pullbackFIM}_{\manifold}(\cusvector{\manifoldparameters}_{\boiteration,\cusvector{\inpvars}})^{-1/2}\mathbb{E}_{\cusvector[bm]{\MCbasesamples}}\!\left[\nabla_{\cusvector[bm]{\manifoldparameters}}\acqfdistfunctional\!\left(\cdot,\,\cusvector[bm]{\MCbasesamples}\right)\right]\right\|^2},
\end{equation}
where using~\eqref{eq:pullback_tensor} we have $\mathrm{tr}\!\left(\cusvector{\pullbackFIM}_{\boiteration}(\cusvector{\inpvars})\right) = \mathrm{tr}\!\left(\jacobianmap_{\boiteration}^\top\,\fishermetric_{\manifold}\!\left(\manifoldmap_{\boiteration}(\cusvector{\inpvars})\right)\jacobianmap_{\boiteration}\right) = \left\|\fishermetric_{\manifold}^{1/2}\!\left(\manifoldmap_{\boiteration}(\cusvector{\inpvars})\right)\jacobianmap_{\boiteration}\right\|_F^2$. And \begin{equation}
    \label{eq:jensen_inequality_step_3}
    \left\|\cusvector{\pullbackFIM}_{\manifold}(\cusvector{\manifoldparameters}_{\boiteration,\cusvector{\inpvars}})^{-1/2}\mathbb{E}_{\cusvector[bm]{\MCbasesamples}}\!\left[\nabla_{\cusvector[bm]{\manifoldparameters}}\acqfdistfunctional\!\left(\cdot,\,\cusvector[bm]{\MCbasesamples}\right)\right]\right\|^2=\mathbb{E}_{\cusvector[bm]{\MCbasesamples}}\!\left[\nabla_{\cusvector[bm]{\manifoldparameters}}\acqfdistfunctional\!\left(\cdot,\,\cusvector[bm]{\MCbasesamples}\right)\right]^\top\, \cusvector{\pullbackFIM}_{\manifold}(\cusvector{\manifoldparameters}_{\boiteration,\cusvector{\inpvars}})^{-1}\, \mathbb{E}_{\cusvector[bm]{\MCbasesamples}}\!\left[\nabla_{\cusvector[bm]{\manifoldparameters}}\acqfdistfunctional\!\left(\cdot,\,\cusvector[bm]{\MCbasesamples}\right)\right].
\end{equation}

\par\smallskip\noindent\textbf{Step 4 (Jensen's inequality).}
Considering  $v = \mathbb{E}_{\cusvector[bm]{\MCbasesamples}}\!\left[\nabla_{\cusvector[bm]{\manifoldparameters}}\acqfdistfunctional\!\left(\cdot,\,\cusvector[bm]{\MCbasesamples}\right)\right]$,~\eqref{eq:jensen_inequality_step_3} can be rewritten as $f: v \mapsto v^\top \cusvector{\pullbackFIM}_{\manifold}(\cusvector{\manifoldparameters}_{\boiteration,\cusvector{\inpvars}})^{-1} v$, where $f$ is convex for $\cusvector{\pullbackFIM}_{\manifold}(\cusvector{\manifoldparameters}_{\boiteration,\cusvector{\inpvars}}) \succ 0$. Using Jensen's inequality we can write $f(\mathbb{E}_{\cusvector[bm]{\MCbasesamples}}\!\left[\nabla_{\cusvector[bm]{\manifoldparameters}}\acqfdistfunctional\!\left(\cdot,\,\cusvector[bm]{\MCbasesamples}\right)\right]) \leq \mathbb{E}_{\cusvector[bm]{\MCbasesamples}}\!\left[f(\nabla_{\cusvector[bm]{\manifoldparameters}}\acqfdistfunctional\!\left(\cdot,\,\cusvector[bm]{\MCbasesamples}\right))\right]$. Hence, we have
\begin{equation}
    \label{eq:jensen_inequality}
    \left\|\cusvector{\pullbackFIM}_{\manifold}(\cusvector{\manifoldparameters}_{\boiteration,\cusvector{\inpvars}})^{-1/2}\mathbb{E}_{\cusvector[bm]{\MCbasesamples}}\!\left[\nabla_{\cusvector[bm]{\manifoldparameters}}\acqfdistfunctional\!\left(\cdot,\,\cusvector[bm]{\MCbasesamples}\right)\right]\right\|^2
    \leq
    \mathbb{E}_{\cusvector[bm]{\MCbasesamples}}\!\left[
        \nabla_{\cusvector[bm]
            {\manifoldparameters}}\acqfdistfunctional\!\left(\cdot,\,\cusvector[bm]{\MCbasesamples}\right)^\top\,
        \cusvector{\pullbackFIM}_{\manifold}(\cusvector{\manifoldparameters}_{\boiteration,\cusvector{\inpvars}})^{-1}\,
        \nabla_{\cusvector[bm]
            {\manifoldparameters}}\acqfdistfunctional\!\left(\cdot,\,\cusvector[bm]{\MCbasesamples}\right)
        \right]
    = \acqfsensitivity\!\left(\manifoldmap_{\boiteration}(\cusvector{\inpvars})\right).
\end{equation}

Combining~\eqref{eq:cauchy_schwarz} and~\eqref{eq:jensen_inequality} gives~\eqref{eq:universal_bound}.

% \begin{remark}[Connection to classical information-geometric bounds]
%     The factorization in~\eqref{eq:fisher_metric_insertion} (inserting $\cusvector{\pullbackFIM}_{\manifold}(\cusvector{\manifoldparameters}_{\boiteration,\cusvector{\inpvars}})^{1/2}\cusvector{\pullbackFIM}_{\manifold}(\cusvector{\manifoldparameters}_{\boiteration,\cusvector{\inpvars}})^{-1/2}$ and applying Cauchy--Schwarz)
%     is the algebraic core of the Cram\'{e}r--Rao lower bound~\citep{amari2000methods}
%     and the Fisher SAM perturbation~\citep{kim2022fisher}.
%     In the Cram\'{e}r--Rao setting, the same identity lower-bounds estimator variance by the
%     inverse Fisher information; in Fisher SAM, it identifies $\cusvector{\pullbackFIM}_{\manifold}(\cusvector{\manifoldparameters}_{\boiteration,\cusvector{\inpvars}})^{-1}\nabla\ell$
%     as the worst-case loss direction under a Fisher-metric constraint.
%     The present bound is the dual: the Fisher matrix upper-bounds the acquisition gradient
%     by separating the contribution of the input-to-parameter geometry from the
%     acquisition-specific sensitivity $\acqfsensitivity$.
% \end{remark}

\subsection{Gaussian Predictive Distribution}
\label{apx:proof_gaussian}

\begin{corollary}[Pullback FIM for a Gaussian Predictive Distribution]
    \label{cor:gaussian}
    Let Assumption~\ref{ass:surrogate_regularity} hold and suppose the predictive distribution is univariate Gaussian,
    $\probability(\obsouts \vert \cusvector{\inpvars}, \observeddata_{\boiteration})
        = \mathcal{N}\!\bigl(\distmean_{\boiteration, \cusvector{\inpvars}},
        \distcov_{\boiteration,\cusvector{\inpvars}}\bigr)$,
    where $\distmean_{\boiteration, \cusvector{\inpvars}} \in \mathbb{R}$ is the posterior
    mean and $\distcov_{\boiteration,\cusvector{\inpvars}} > 0$ is the posterior variance,
    parameterized by
    $\cusvector[bm]{\manifoldparameters}_{\boiteration,\cusvector{\inpvars}}
        = \bigl(\distmean_{\boiteration, \cusvector{\inpvars}},\distcov_{\boiteration,\cusvector{\inpvars}}\bigr) \in \manifold = \mathbb{R} \times \mathbb{R}_{+}$.
    The Fisher--Rao metric at this point on the predictive distribution manifold is
    \begin{equation}
        \label{eq:fim_gaussian}
        \fishermetric_{\manifold}\!\bigl(\cusvector[bm]{\manifoldparameters}_{\boiteration,\cusvector{\inpvars}}\bigr)
        = \mathrm{diag}\!\left(
        \distcov_{\boiteration,\cusvector{\inpvars}}^{-1},\;
        \tfrac{1}{2}\,\distcov_{\boiteration,\cusvector{\inpvars}}^{-2}
        \right),
    \end{equation}
    and the pullback tensor~\eqref{eq:pullback_tensor} reduces to
    \begin{equation}
        \label{eq:pullback_gaussian}
        \cusvector{\pullbackFIM}_{\boiteration}(\cusvector{\inpvars})
        = \frac{\nabla\distmean_{\boiteration, \cusvector{\inpvars}}\,
            \nabla\distmean_{\boiteration, \cusvector{\inpvars}}^\top}
        {\distcov_{\boiteration,\cusvector{\inpvars}}}
        + \frac{\nabla\distcov_{\boiteration,\cusvector{\inpvars}}\,
            \nabla\distcov_{\boiteration,\cusvector{\inpvars}}^\top}
        {2\,\distcov_{\boiteration,\cusvector{\inpvars}}^{2}},
    \end{equation}
    where $\nabla$ denotes the gradient with respect to $\cusvector{\inpvars}$.
\end{corollary}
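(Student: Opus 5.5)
We prove Corollary~\ref{cor:gaussian} in two steps: first compute the Fisher--Rao metric of the univariate Gaussian family directly from the definition~\eqref{eq:fim_def}, then substitute it into Definition~\ref{def:pullback_tensor}.

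\textbf{Step 1: the Fisher metric.} Write the parameters as $\manifoldparameters=(m,v)$ with $v>0$. The log-density is
\[
\log \probability(\obsouts;m,v) = -\tfrac12\log(2\pi v) - \frac{(\obsouts-m)^2}{2v}.
\]
The scores are therefore
\[
\partial_m \log \probability = \frac{\obsouts-m}{v}, \qquad
\partial_v \log \probability = -\frac{1}{2v} + \frac{(\obsouts-m)^2}{2v^2}.
\]
Set $Z=(\obsouts-m)/\sqrt{v}\sim\mathcal{N}(0,1)$ and use its moments $\mathbb{E}[Z]=\mathbb{E}[Z^3]=0$, $\mathbb{E}[Z^2]=1$, $\mathbb{E}[Z^4]=3$. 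This gives three entries:
\begin{itemize}
\item $\mathbb{E}[(\partial_m\log\probability)^2]=v^{-1}$.
\item The off-diagonal entry is $\mathbb{E}\big[\frac{Z}{\sqrt v}\cdot\frac{Z^2-1}{2v}\big]=0$, since it involves only odd moments of $Z$.
\item $\mathbb{E}\big[\big(\frac{Z^2-1}{2v}\big)^2\big]=\frac{3-2+1}{4v^2}=\frac{1}{2v^2}$.
\end{itemize}
Together these yield~\eqref{eq:fim_gaussian}. The metric is positive definite because $v>0$, which is consistent with Assumption~\ref{ass:surrogate_regularity}.

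\textbf{Step 2: the pullback.} With $\dims{\manifoldparameters}=2$, the Jacobian of the posterior map is the $2\times\dims{\inpvars}$ matrix whose rows are $\nabla\distmean_{\boiteration,\cusvector{\inpvars}}^\top$ and $\nabla\distcov_{\boiteration,\cusvector{\inpvars}}^\top$. It exists because $\manifoldmap_{\boiteration}$ is $C^1$ by Assumption~\ref{ass:surrogate_regularity}.

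For a diagonal metric $\mathrm{diag}(a,b)$, the product $\jacobianmap^\top\,\mathrm{diag}(a,b)\,\jacobianmap$ equals $a\,r_1r_1^\top + b\,r_2r_2^\top$, where $r_1,r_2$ are the rows of $\jacobianmap$ written as column vectors. Substituting $a=\distcov^{-1}$ and $b=\tfrac12\distcov^{-2}$ gives~\eqref{eq:pullback_gaussian} immediately.

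The only step requiring real care is the fourth-moment computation for the variance entry. In particular, the parameterization is by the variance $\distcov$ rather than the standard deviation, and that choice is what produces the factor $\tfrac12\distcov^{-2}$.
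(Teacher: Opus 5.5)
Your proposal is correct and follows the same route as the paper: compute the Gaussian scores in the $(\distmean,\distcov)$ parameterization, obtain the diagonal Fisher matrix, and substitute it into $\jacobianmap^\top\fishermetric_{\manifold}\jacobianmap$ using the Jacobian rows $\nabla\distmean^\top$ and $\nabla\distcov^\top$. Your explicit use of the moments $\mathbb{E}[Z^3]=0$ and $\mathbb{E}[Z^4]=3$ works out the off-diagonal and variance entries, which the paper only asserts with a citation.
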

The first term, $\nabla\distmean_{\boiteration, \cusvector{\inpvars}}\nabla\distmean_{\boiteration, \cusvector{\inpvars}}^\top /\distcov_{\boiteration, \cusvector{\inpvars}}$, is large where the posterior mean changes rapidly relative to uncertainty, so it captures the local exploitation geometry. The second term, $\nabla\distcov_{\boiteration, \cusvector{\inpvars}}\nabla\distcov_{\boiteration, \cusvector{\inpvars}}^\top /(2\distcov_{\boiteration, \cusvector{\inpvars}}^{2})$, is large where the posterior variance changes rapidly, so it captures the local exploration geometry.
% The Gaussian pullback tensor therefore combines mean and variance sensitivity in a single positive semidefinite rank-at-most-two tensor on $\inpvarsset$.

\begin{proof}[Proof of Corollary~\ref{cor:gaussian}]
    The log-likelihood of $\mathcal{N}(\obsouts;\distmean_{\boiteration, \cusvector{\inpvars}},\distcov_{\boiteration, \cusvector{\inpvars}})$ is
    $\log\probability = -\tfrac{1}{2}\log(2\pi\distcov_{\boiteration, \cusvector{\inpvars}}) - (\obsouts-\distmean_{\boiteration, \cusvector{\inpvars}})^2/(2\distcov_{\boiteration, \cusvector{\inpvars}})$.
    Differentiating with respect to the distribution parameters $\cusvector[bm]{\manifoldparameters} = (\distmean_{\boiteration, \cusvector{\inpvars}}, \distcov_{\boiteration, \cusvector{\inpvars}})$ gives the score functions
    \[
        \frac{\partial \log \probability}{\partial \distmean_{\boiteration, \cusvector{\inpvars}}} = \frac{\obsouts - \distmean_{\boiteration, \cusvector{\inpvars}}}{\distcov_{\boiteration, \cusvector{\inpvars}}},
        \qquad
        \frac{\partial \log \probability}{\partial \distcov_{\boiteration, \cusvector{\inpvars}}} = \frac{(\obsouts - \distmean_{\boiteration, \cusvector{\inpvars}})^2}{2\distcov_{\boiteration, \cusvector{\inpvars}}^2} - \frac{1}{2\distcov_{\boiteration, \cusvector{\inpvars}}}.
    \]
    The secondary diagonal terms would be zero~\cite{amari2016information}, and the FIM is a diagonal matrix
    % \[
    %     \mathbb{E}\!\left[
    %         \frac{\partial \log\probability}{\partial \distmean_{\boiteration, \cusvector{\inpvars}}}
    %         \frac{\partial \log\probability}{\partial \distcov_{\boiteration, \cusvector{\inpvars}}}
    %         \right]
    %     = \frac{\mathbb{E}[(\obsouts-\distmean_{\boiteration, \cusvector{\inpvars}})^3]}{2\distcov_{\boiteration, \cusvector{\inpvars}}^3}
    %     - \frac{\mathbb{E}[\obsouts-\distmean_{\boiteration, \cusvector{\inpvars}}]}{2\distcov_{\boiteration, \cusvector{\inpvars}}^2} = 0,
    % \]
    % since all odd central moments of a Gaussian are zero.
    % The diagonal entries follow from
    % $\mathbb{E}[(\obsouts-\distmean_{\boiteration, \cusvector{\inpvars}})^2] = \distcov_{\boiteration, \cusvector{\inpvars}}$ and
    % $\mathbb{E}[(\obsouts-\distmean_{\boiteration, \cusvector{\inpvars}})^4] = 3\distcov_{\boiteration, \cusvector{\inpvars}}^2$:
    \[
        \fishermetric_{\manifold}\!\bigl(\cusvector[bm]{\manifoldparameters}_{\boiteration,\cusvector{\inpvars}}\bigr)
        = \mathrm{diag}\!\left(\distcov_{\boiteration, \cusvector{\inpvars}}^{-1},\; \tfrac{1}{2}\distcov_{\boiteration, \cusvector{\inpvars}}^{-2}\right).
    \]
    The posterior map $\manifoldmap_{\boiteration}$ has Jacobian
    $\jacobianmap_{\boiteration}(\cusvector{\inpvars})
        = \bigl[\nabla\distmean_{\boiteration, \cusvector{\inpvars}},\;\nabla\distcov_{\boiteration,\cusvector{\inpvars}}\bigr]^\top
        \in \mathbb{R}^{2 \times \dims{\inpvars}}$,
    where each row is the gradient of one parameter with respect to $\cusvector{\inpvars}$.
    Substituting into~\eqref{eq:pullback_tensor}:
    \begin{align*}
        \cusvector{\pullbackFIM}_{\boiteration}(\cusvector{\inpvars})
         & = \jacobianmap_{\boiteration}^\top\,
        \mathrm{diag}\!\bigl(\distcov_{\boiteration, \cusvector{\inpvars}}^{-1},\;
        \tfrac{1}{2}\distcov_{\boiteration, \cusvector{\inpvars}}^{-2}\bigr)\,
        \jacobianmap_{\boiteration}                               \\
         & = \distcov_{\boiteration, \cusvector{\inpvars}}^{-1}\,
        \nabla\distmean_{\boiteration, \cusvector{\inpvars}}\,\nabla\distmean_{\boiteration, \cusvector{\inpvars}}^\top
        + \tfrac{1}{2}\distcov_{\boiteration, \cusvector{\inpvars}}^{-2}\,
        \nabla\distcov_{\boiteration, \cusvector{\inpvars}}\,\nabla\distcov_{\boiteration, \cusvector{\inpvars}}^\top.
    \end{align*}
\end{proof}

\subsection{Acquisition Sensitivity under a Scalar Gaussian Predictive}
\label{apx:gaussian_sensitivity}

Substituting $\fishermetric_{\manifold}^{-1} = \mathrm{diag}(\distcov_{\boiteration, \cusvector{\inpvars}},\, 2\distcov_{\boiteration, \cusvector{\inpvars}}^{2})$ from~\eqref{eq:fim_gaussian} into the definition of $\acqfsensitivity$ in~\eqref{eq:jensen_inequality} gives
\begin{equation}
    \label{eq:sensitivity_gaussian_generic}
    \acqfsensitivity\!\bigl(\cusvector[bm]{\manifoldparameters}_{\boiteration}\bigr)
    = \mathbb{E}_{\cusvector[bm]{\MCbasesamples}}\!\left[
        \distcov_{\boiteration, \cusvector{\inpvars}}\!\left(\frac{\partial\acqfdistfunctional}{\partial\distmean_{\boiteration, \cusvector{\inpvars}}}\right)^{\!2}
        + 2\distcov_{\boiteration, \cusvector{\inpvars}}^{2}\!\left(\frac{\partial\acqfdistfunctional}{\partial\distcov_{\boiteration, \cusvector{\inpvars}}}\right)^{\!2}
        \right].
\end{equation}
For both analytic and Monte Carlo AFs, boundedness of $\acqfsensitivity$ reduces to controlling partial derivatives of $\acqfdistfunctional$ with respect to $(\distmean_{\boiteration, \cusvector{\inpvars}}, \distcov_{\boiteration, \cusvector{\inpvars}})$.

\paragraph{Analytic acquisition functions.}
Let $z_{\boiteration} = (\distmean_{\boiteration, \cusvector{\inpvars}} - \obsouts^{\incumbentobs})/\sqrt{\distcov_{\boiteration, \cusvector{\inpvars}}}$, $\probabilitycdf$ and $\probabilitypdf$ be the cumulative distribution function (CDF) and probability density function (PDF) of the standard normal distribution.

For UCB~\citep{srinivas2012information,garnett_bayesoptbook_2023}, $\acqfdistfunctional = \distmean_{\boiteration, \cusvector{\inpvars}} + \sqrt{\kappa\distcov_{\boiteration, \cusvector{\inpvars}}}$ gives

$\partial\acqfdistfunctional/\partial\distmean_{\boiteration, \cusvector{\inpvars}} = 1$ and $\partial\acqfdistfunctional/\partial\distcov_{\boiteration, \cusvector{\inpvars}} = \sqrt{\kappa}/(2\sqrt{\distcov_{\boiteration, \cusvector{\inpvars}}})$, so
\begin{equation}
    \label{eq:sensitivity_ucb}
    \acqfsensitivity^{\text{UCB}} = \distcov_{\boiteration, \cusvector{\inpvars}}\!\left(1 + \tfrac{\kappa}{2}\right).
\end{equation}
For EI~\citep{mockus1998application,jones1998efficient}, $\acqfdistfunctional^{\text{EI}} = (\distmean_{\boiteration, \cusvector{\inpvars}} - \obsouts^{\incumbentobs})\probabilitycdf(z_{\boiteration}) + \sqrt{\distcov_{\boiteration, \cusvector{\inpvars}}}\,\probabilitypdf(z_{\boiteration})$ gives

$\partial z_{\boiteration}/\partial\distcov_{\boiteration, \cusvector{\inpvars}} = -z_{\boiteration}/(2\distcov_{\boiteration, \cusvector{\inpvars}})$, $\partial\acqfdistfunctional^{\text{EI}}/\partial\distmean_{\boiteration, \cusvector{\inpvars}} = \probabilitycdf(z_{\boiteration})$ and $\partial\acqfdistfunctional^{\text{EI}}/\partial\distcov_{\boiteration, \cusvector{\inpvars}} = \probabilitypdf(z_{\boiteration})/(2\sqrt{\distcov_{\boiteration, \cusvector{\inpvars}}})$. Using $\probabilitycdf \leq 1$ and $\probabilitypdf \leq 1/\sqrt{2\pi}$.
\begin{equation}
    \label{eq:sensitivity_ei}
    \acqfsensitivity^{\text{EI}} = \distcov_{\boiteration, \cusvector{\inpvars}}\!\left(\probabilitycdf(z_{\boiteration})^2 + \tfrac{1}{2}\probabilitypdf(z_{\boiteration})^2\right) \leq \distcov_{\boiteration, \cusvector{\inpvars}}\!\left(1 + \tfrac{1}{4\pi}\right),
\end{equation}

For PI~\citep{mockus1998application}, $\acqfdistfunctional^{\text{PI}} = \probabilitycdf(z_{\boiteration})$ gives

$\partial\acqfdistfunctional^{\text{PI}}/\partial\distmean_{\boiteration, \cusvector{\inpvars}} = \probabilitypdf(z_{\boiteration})/\sqrt{\distcov_{\boiteration, \cusvector{\inpvars}}}$ and $\partial\acqfdistfunctional^{\text{PI}}/\partial\distcov_{\boiteration, \cusvector{\inpvars}} = -z_{\boiteration}\probabilitypdf(z_{\boiteration})/(2\distcov_{\boiteration, \cusvector{\inpvars}})$. Since $e^{-z^{2}}(1 + z^{2}/2) \leq 1$ for all $z \in \mathbb{R}$,
\begin{equation}
    \label{eq:sensitivity_pi}
    \acqfsensitivity^{\text{PI}} = \probabilitypdf(z_{\boiteration})^2\!\left(1 + \tfrac{z_{\boiteration}^2}{2}\right) \leq \tfrac{1}{2\pi},
\end{equation}
% The UCB and EI bounds scale linearly with $\distcov_{\boiteration, \cusvector{\inpvars}}$, while the PI bound holds uniformly on $\manifold$ without any variance condition. 
For bounded $\distcov_{\boiteration, \cusvector{\inpvars}}$, the AF sensitivities in~\eqref{eq:sensitivity_ucb},~\eqref{eq:sensitivity_ei}, and~\eqref{eq:sensitivity_pi} are bounded.

\paragraph{Remark on LogEI.}
The experiments use LogEI rather than analytic EI. In the strictly positive-variance regime enforced by Assumption~\ref{ass:surrogate_regularity}, the stabilized LogEI implementation is smooth in the predictive parameters on compact subsets of $\mathbb{R}\times(0,\infty)$; see Remark~\ref{rem:logei_regularity}. The same local regularity argument therefore applies in the experimental setting, even though the closed-form sensitivity calculations below are written for EI because the formulas are shorter.

\begin{lemma}[Monte Carlo sensitivity bound]
    \label{lem:mc_sensitivity_bound}
    Let Assumption~\ref{ass:surrogate_regularity} hold. Suppose $\acqfdistfunctional(\cusvector[bm]{\manifoldparameters}_{\boiteration},\cusvector[bm]{\MCbasesamples};\cusvector[bm]{\acqfparams})$ satisfies, for some $L_{\acqfdistfunctional} > 0$ and almost every $\cusvector[bm]{\MCbasesamples}$,
    \begin{equation}
        \label{eq:mc_lipschitz_condition}
        \left|\frac{\partial\acqfdistfunctional}{\partial\distmean_{\boiteration, \cusvector{\inpvars}}}\right|
        \leq L_{\acqfdistfunctional},
        \qquad
        \left|\frac{\partial\acqfdistfunctional}{\partial\distcov_{\boiteration, \cusvector{\inpvars}}}\right|
        \leq \frac{L_{\acqfdistfunctional}\,|\cusvector[bm]{\MCbasesamples}|}{2\sqrt{\distcov_{\boiteration, \cusvector{\inpvars}}}}.
    \end{equation}
    Then
    \begin{equation}
        \label{eq:sensitivity_mc_generic}
        \acqfsensitivity\!\bigl(\cusvector[bm]{\manifoldparameters}_{\boiteration}\bigr)
        \leq \tfrac{3}{2}\,L_{\acqfdistfunctional}^{2}\,\distcov_{\boiteration, \cusvector{\inpvars}}.
    \end{equation}
\end{lemma}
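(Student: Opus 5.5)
Start from the Gaussian form of the sensitivity, equation~\eqref{eq:sensitivity_gaussian_generic}. Substituting $\fishermetric_{\manifold}^{-1}=\mathrm{diag}(\distcov_{\boiteration, \cusvector{\inpvars}},\,2\distcov_{\boiteration, \cusvector{\inpvars}}^{2})$ into~\eqref{eq:acqf_sensitivity} writes $\acqfsensitivity$ as the expectation over $\cusvector[bm]{\MCbasesamples}$ of a mean term $\distcov(\partial\acqfdistfunctional/\partial\distmean)^2$ and a variance term $2\distcov^2(\partial\acqfdistfunctional/\partial\distcov)^2$. Assumption~\ref{ass:surrogate_regularity} gives $\distcov_{\boiteration, \cusvector{\inpvars}}>0$, so $\fishermetric_\manifold^{-1}$ exists. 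The interchange and the finiteness of the expectation are provided by Lemma~\ref{lem:acqf_regularity}, or follow directly from the bounds below. The proof then bounds each of the two terms pointwise for almost every $\cusvector[bm]{\MCbasesamples}$ and integrates.

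\textbf{Mean term.} The first inequality in~\eqref{eq:mc_lipschitz_condition} gives $\distcov(\partial\acqfdistfunctional/\partial\distmean)^2\le L_{\acqfdistfunctional}^2\distcov$.

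\textbf{Variance term.} The second inequality gives
\[
2\distcov^2\left(\frac{\partial\acqfdistfunctional}{\partial\distcov}\right)^2\le 2\distcov^2\cdot\frac{L_{\acqfdistfunctional}^2|\cusvector[bm]{\MCbasesamples}|^2}{4\distcov}=\tfrac12 L_{\acqfdistfunctional}^2\distcov\,|\cusvector[bm]{\MCbasesamples}|^2 .
\]
The $1/\sqrt{\distcov}$ singularity is exactly cancelled by the $\distcov^2$ Fisher weight. This cancellation is why the bound is linear in $\distcov$, matching the analytic EI and UCB results in~\eqref{eq:sensitivity_ei} and~\eqref{eq:sensitivity_ucb}.

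\textbf{Integration.} Taking expectations and using $\mathbb{E}[|\cusvector[bm]{\MCbasesamples}|^2]=1$ for scalar $\cusvector[bm]{\MCbasesamples}\sim\mathcal{N}(0,1)$ gives
\[
\acqfsensitivity\le L_{\acqfdistfunctional}^2\distcov+\tfrac12L_{\acqfdistfunctional}^2\distcov=\tfrac32L_{\acqfdistfunctional}^2\distcov ,
\]
which is~\eqref{eq:sensitivity_mc_generic}.

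\textbf{Main obstacle.} The only delicate point is measure-theoretic. The partial derivatives exist only almost everywhere; for MC-EI, for instance, the ReLU kink at $\distmean+\distcov^{1/2}\cusvector[bm]{\MCbasesamples}=\obsouts^{\incumbentobs}$ is a null set in $\cusvector[bm]{\MCbasesamples}$. The pointwise bounds must therefore be stated almost surely. The dominating function $L_{\acqfdistfunctional}^2\distcov(1+|\cusvector[bm]{\MCbasesamples}|^2/2)$ is integrable, so the expectation defining $\acqfsensitivity$ is finite, and it is bounded by the integrated bound.

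One caveat concerns vector-valued base noise. In that case $|\cusvector[bm]{\MCbasesamples}|^2$ has mean equal to the dimension, and the constant would change accordingly. I would therefore restrict to the scalar case assumed in the statement.
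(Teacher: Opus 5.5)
Your proposal is correct and is essentially the paper's proof: you substitute the two bounds of \eqref{eq:mc_lipschitz_condition} into the Gaussian form \eqref{eq:sensitivity_gaussian_generic}, the variance term becomes $\tfrac{1}{2}L_{\acqfdistfunctional}^{2}\distcov_{\boiteration, \cusvector{\inpvars}}\,\cusvector[bm]{\MCbasesamples}^{2}$, and integrating with $\mathbb{E}[\cusvector[bm]{\MCbasesamples}^{2}] = 1$ gives $\tfrac{3}{2}L_{\acqfdistfunctional}^{2}\distcov_{\boiteration, \cusvector{\inpvars}}$. Your extra remarks (the almost-everywhere bounds, the integrable dominating function, and the restriction to scalar base noise) are sound additions that the paper leaves implicit.
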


\begin{proof}
    Substituting~\eqref{eq:mc_lipschitz_condition} into~\eqref{eq:sensitivity_gaussian_generic} and using $\mathbb{E}[\cusvector[bm]{\MCbasesamples}^{2}] = 1$,
    \[
        \acqfsensitivity
        \leq \mathbb{E}_{\cusvector[bm]{\MCbasesamples}}\!\left[
        L_{\acqfdistfunctional}^{2}\distcov_{\boiteration, \cusvector{\inpvars}}
        + \tfrac{1}{2}L_{\acqfdistfunctional}^{2}\distcov_{\boiteration, \cusvector{\inpvars}}\,\cusvector[bm]{\MCbasesamples}^{2}
        \right]
        = \tfrac{3}{2}L_{\acqfdistfunctional}^{2}\distcov_{\boiteration, \cusvector{\inpvars}}. \qedhere
    \]
\end{proof}

For MC-EI~\citep{wilson2017reparameterization,wilson2018maximizing}, $\acqfdistfunctional^{\text{EI}} = \mathrm{ReLU}(\distmean_{\boiteration, \cusvector{\inpvars}} + \sqrt{\distcov_{\boiteration, \cusvector{\inpvars}}}\,\cusvector[bm]{\MCbasesamples} - \obsouts^{\incumbentobs})$ from Table~\ref{tab:acqf} gives

$|\partial\acqfdistfunctional^{\text{EI}}/\partial\distmean_{\boiteration, \cusvector{\inpvars}}| = \mathbf{1}^{+} \leq 1$ and $|\partial\acqfdistfunctional^{\text{EI}}/\partial\distcov_{\boiteration, \cusvector{\inpvars}}| = \mathbf{1}^{+}|\cusvector[bm]{\MCbasesamples}|/(2\sqrt{\distcov_{\boiteration, \cusvector{\inpvars}}}) \leq |\cusvector[bm]{\MCbasesamples}|/(2\sqrt{\distcov_{\boiteration, \cusvector{\inpvars}}})$, so~\eqref{eq:mc_lipschitz_condition} holds with $L_{\acqfdistfunctional} = 1$ and $\acqfsensitivity^{\text{MC-EI}} \leq \tfrac{3}{2}\distcov_{\boiteration, \cusvector{\inpvars}}$.

For MC-PI~\citep{wilson2018maximizing}, $\acqfdistfunctional^{\text{PI}} = \mathrm{sigmoid}((\distmean_{\boiteration, \cusvector{\inpvars}} + \sqrt{\distcov_{\boiteration, \cusvector{\inpvars}}}\,\cusvector[bm]{\MCbasesamples} - \obsouts^{\incumbentobs})/\tau)$

and both partial derivatives have the term $\mathrm{sigmoid}'(\cdot)/\tau \leq 1/(4\tau)$, so $L_{\acqfdistfunctional} = 1/(4\tau)$. Using $\mathbb{E}[\cusvector[bm]{\MCbasesamples}^{2}] = 1$, we get $\acqfsensitivity^{\text{MC-PI}} \leq 3\distcov_{\boiteration, \cusvector{\inpvars}}/(32\tau^{2})$.

For MC-UCB~\citep{wilson2018maximizing}, $\acqfdistfunctional^{\text{UCB}} = \distmean_{\boiteration, \cusvector{\inpvars}} + \sqrt{\kappa\pi/2}\,\sqrt{\distcov_{\boiteration, \cusvector{\inpvars}}}\,|\cusvector[bm]{\MCbasesamples}|$ from Table~\ref{tab:acqf},

which depends on $\distmean_{\boiteration, \cusvector{\inpvars}}$ outside the reparameterized sample, so the second condition in~\eqref{eq:mc_lipschitz_condition} cannot hold with the same $L_{\acqfdistfunctional}$ as the first. Direct pathwise differentiation gives $\partial\acqfdistfunctional^{\text{UCB}}/\partial\distmean_{\boiteration, \cusvector{\inpvars}} = 1$ and $\partial\acqfdistfunctional^{\text{UCB}}/\partial\distcov_{\boiteration, \cusvector{\inpvars}} = \sqrt{\kappa\pi/2}\,|\cusvector[bm]{\MCbasesamples}|/(2\sqrt{\distcov_{\boiteration, \cusvector{\inpvars}}})$; substituting into~\eqref{eq:sensitivity_gaussian_generic} and using $\mathbb{E}[\cusvector[bm]{\MCbasesamples}^{2}] = 1$, we get $\acqfsensitivity^{\text{MC-UCB}} = \distcov_{\boiteration, \cusvector{\inpvars}}\!\left(1 + \tfrac{\kappa\pi}{4}\right)$.

\begin{table}[ht]
    \centering
    \caption{Calculated bounds on $\acqfsensitivity(\cusvector[bm]{\manifoldparameters}_{\boiteration,\cusvector{\inpvars}})$ for the scalar Gaussian predictive. If $\distcov_{\boiteration, \cusvector{\inpvars}}$ is bounded, which holds for any GP with finite prior variance, the shown AF sensitivities are also bounded. Analytic PI is bounded globally on $\manifold$. Since all entries depend only on $(\distmean_{\boiteration, \cusvector{\inpvars}}, \distcov_{\boiteration, \cusvector{\inpvars}})$, they apply to any surrogate with a Gaussian predictive.}
    \label{tab:sensitivity_bounds}
    \vspace{0.5em}
    \small
    \begingroup
    \renewcommand{\arraystretch}{1.4}
    \begin{tabular}{@{} l c c @{}}
        \toprule
        \textbf{Acquisition} & \textbf{Analytic bound on $\acqfsensitivity$}                           & \textbf{Monte Carlo bound on $\acqfsensitivity$}                           \\
        \midrule
        EI                   & $\distcov_{\boiteration, \cusvector{\inpvars}}\bigl(1 + 1/(4\pi)\bigr)$ & $\tfrac{3}{2}\distcov_{\boiteration, \cusvector{\inpvars}}$                \\
        UCB                  & $\distcov_{\boiteration, \cusvector{\inpvars}}\bigl(1 + \kappa/2\bigr)$ & $\distcov_{\boiteration, \cusvector{\inpvars}}\bigl(1 + \kappa\pi/4\bigr)$ \\
        PI                   & $1/(2\pi)$                                                              & $\tfrac{3}{32\tau^{2}}\distcov_{\boiteration, \cusvector{\inpvars}}$       \\
        \bottomrule
    \end{tabular}
    \endgroup
\end{table}

\begin{remark}[Extension to multivariate outputs and parallel BO]
    \label{rem:pullback_multivariate}
    For a multivariate Gaussian predictive distribution $\mathcal{N}(\cusvector[bm]{\distmean}_{\boiteration},\,\cusvector{\distcov}_{\boiteration})$ with $\cusvector[bm]{\distmean}_{\boiteration} \in \mathbb{R}^{\multiobjective}$ and $\cusvector{\distcov}_{\boiteration} \in \mathbb{R}^{\multiobjective\times\multiobjective}$ (as in Multi-Objective Bayesian Optimization (MOBO) with $\multiobjective$ correlated outputs), the Fisher--Rao metric takes the block-diagonal form $\fishermetric_{\manifold} = \mathrm{diag}\!\bigl(\cusvector{\distcov}_{\boiteration}^{-1},\,\tfrac{1}{2}\cusvector{\distcov}_{\boiteration}^{-1}\otimes\cusvector{\distcov}_{\boiteration}^{-1}\bigr)$~\citep{amari2016information}, and~\eqref{eq:pullback_gaussian} extends to
    \begin{equation}
    \label{eq:pullback_multivariate}
    \cusvector{\pullbackFIM}_{\boiteration}(\batchinpvars)    = \jacobianmap_{\cusvector[bm]{\distmean}}^\top\,        \cusvector{\distcov}_{\boiteration}^{-1}\,\jacobianmap_{\cusvector[bm]{\distmean}}        + \tfrac{1}{2}\,\jacobianmap_{\cusvector{\distcov}}^\top\!        \left(\cusvector{\distcov}_{\boiteration}^{-1}        \otimes\cusvector{\distcov}_{\boiteration}^{-1}\right)        \jacobianmap_{\cusvector{\distcov}},
    \end{equation}
    where $\jacobianmap_{\cusvector[bm]{\distmean}}$ and $\jacobianmap_{\cusvector{\distcov}}$ are the Jacobians of the vectorized mean and covariance with respect to $\batchinpvars$.

    We limit our discussion to the scalar case for simplicity. However, following~\citet{wang2020parallel} or~\citet{balandat2020botorch}, one can apply a similar analysis to obtain a bound for the acquisition sensitivity $\acqfsensitivity^{(\batchsize)}(\cusvector[bm]{\manifoldparameters}_{\boiteration,\batchinpvars})$ for Multi-Objective Bayesian Optimization or parallel BO with $\batchsize>1$.
\end{remark}

\subsection{Pullback FIM Trace Under the Critical Lengthscale}
\label{apx:proof_hd}

\paragraph{Distance concentration.}
We assume that the initial training points $\cusvector{\inpvars}_i$ for $i = 1, \ldots, \Numobservation$ are drawn i.i.d.\ from $\mathrm{Uniform}[0,1]^{\dims{\inpvars}}$. Let $d:=\|\cusvector{\inpvars} - \cusvector{\inpvars}_{\numobservation}\|$. Since $\mathbb{E}\!\left[(\inpvars_j-\inpvars_{\numobservation,j})^2\right] = (\inpvars_j-\tfrac12)^2 + \tfrac1{12}$ for $\inpvars_{\numobservation,j}\sim\mathrm{Uniform}[0,1]$, we have
\[
    \frac{d}{\sqrt{\dims{\inpvars}}}
    \xrightarrow{\mathrm{a.s.}}
    \sqrt{v_{\cusvector{\inpvars}}},
    \qquad
    v_{\cusvector{\inpvars}}
    := \frac{1}{\dims{\inpvars}}\sum_{j=1}^{\dims{\inpvars}}
    \left[(\inpvars_j-\tfrac12)^2+\tfrac1{12}\right]
    \in \left[\tfrac{1}{12},\,\tfrac{1}{3}\right].
\]
Hence $d = \Theta(\sqrt{\dims{\inpvars}})$ almost surely. Here $f = \Theta(g)$ means there exist $c_1, c_2 > 0$ and $d_0$ such that $c_1 g(d) \leq f(d) \leq c_2 g(d)$ for all $d \geq d_0$.

\paragraph{FIM trace for the SE kernel.}
For a GP with SE kernel $\gpkernel_\lengthscale(\cusvector{\inpvars}, \cusvector{\inpvars}') = \signalvar\exp\!\bigl(-\|\cusvector{\inpvars}-\cusvector{\inpvars}'\|^2/(2\lengthscale^2)\bigr)$, let $\GPkernel_{\numobservation\numobservation'} = \gpkernel_\lengthscale(\cusvector{\inpvars}_{\numobservation},\cusvector{\inpvars}_{\numobservation'})$ and $\gpweights = (\GPkernel + \noisevar I)^{-1}\cusvector{\obsouts}$. Then
\[
    \nabla_{\cusvector{\inpvars}}\distmean_{\boiteration}(\cusvector{\inpvars})
    = \sum_{\numobservation=1}^{\Numobservation} \alpha_{\numobservation}\,\nabla_{\cusvector{\inpvars}}\gpkernel_\lengthscale(\cusvector{\inpvars}, \cusvector{\inpvars}_{\numobservation}),
    \qquad
    \nabla_{\cusvector{\inpvars}}\gpkernel_\lengthscale(\cusvector{\inpvars}, \cusvector{\inpvars}_{\numobservation})
    = -\frac{\gpkernel_\lengthscale(\cusvector{\inpvars},\cusvector{\inpvars}_{\numobservation})}{\lengthscale^2}\,(\cusvector{\inpvars} - \cusvector{\inpvars}_{\numobservation}).
\]
By Corollary~\ref{cor:gaussian},
\[
    \mathrm{tr}(\cusvector{\pullbackFIM}_{\boiteration})
    = \frac{\|\nabla\distmean_{\boiteration}\|^2}{\distcov_{\boiteration}}
    + \frac{\|\nabla\distcov_{\boiteration}\|^2}{2\distcov_{\boiteration}^2},
\]
and both terms scale with $\gpkernel_\lengthscale/\lengthscale^2$. Substituting $\lengthscale = c\dims{\inpvars}^p$ and $d = \Theta(\sqrt{\dims{\inpvars}})$ gives
\[
    \gpkernel_\lengthscale(\cusvector{\inpvars}, \cusvector{\inpvars}_{\numobservation})
    \;\xrightarrow{\mathrm{a.s.}}\;
    \signalvar\exp\!\left(-\frac{v_{\cusvector{\inpvars}}}{2c^2}\,\dims{\inpvars}^{1-2p}\right).
\]
This results in three cases:

\par\smallskip\noindent\textbf{Case $p < 1/2$.}
The exponent $v_{\cusvector{\inpvars}}\dims{\inpvars}^{1-2p}/(2c^2) \to \infty$, so $\gpkernel_\lengthscale \to 0$ exponentially. Hence $\nabla_{\cusvector{\inpvars}}\gpkernel_\lengthscale \to 0$ exponentially, and therefore $\nabla\distmean_{\boiteration} \to 0$ and $\mathrm{tr}(\cusvector{\pullbackFIM}_{\boiteration}) \to 0$ exponentially.

\par\smallskip\noindent\textbf{Case $p > 1/2$.}
The exponent $\to 0$, so $\gpkernel_\lengthscale \to \signalvar$ (constant). The gradient $\nabla_{\cusvector{\inpvars}}\gpkernel_\lengthscale \propto \gpkernel_\lengthscale \cdot (\cusvector{\inpvars} - \cusvector{\inpvars}_{\numobservation})/\lengthscale^2$. Since each coordinate difference is $\Theta(1)$ while $\lengthscale^2 = c^2\dims{\inpvars}^{2p}$, each entry scales as $\Theta(\dims{\inpvars}^{-2p})$ and the full gradient norm scales as $\Theta(\dims{\inpvars}^{1/2-2p}) \to 0$. Hence $\nabla\distmean_{\boiteration} \to 0$ and $\mathrm{tr}(\cusvector{\pullbackFIM}_{\boiteration}) \to 0$ at a polynomial rate.

\par\smallskip\noindent\textbf{Case $p = 1/2$.}
The exponent converges to $v_{\cusvector{\inpvars}}/(2c^2) \in (0,\infty)$, so $\gpkernel_\lengthscale \to \gamma_c := \signalvar e^{-v_{\cusvector{\inpvars}}/(2c^2)} \in (0,\signalvar)$. The Gram matrix concentrates to $\GPkernel_\infty = (\signalvar + \noisevar - \bar\gamma)I + \bar\gamma\,\mathbf{1}\mathbf{1}^\top$, where $\bar\gamma = \signalvar e^{-1/(12c^2)}$ is the pairwise limit for i.i.d.\ training pairs. Since $\lambda_{\min}(\GPkernel_\infty) > 0$, we have $\gpweights = \Theta(1)$. Each entry of $\nabla_{\cusvector{\inpvars}}\gpkernel_\lengthscale$ scales as $\Theta(\dims{\inpvars}^{-1})$, so the full gradient norm scales as $\Theta(\dims{\inpvars}^{-1/2})$ and summing $\dims{\inpvars}$ squared terms gives $\mathrm{tr}(\cusvector{\pullbackFIM}_{\boiteration}) = \Theta(\dims{\inpvars}^{-1})$.

In all three cases, the trace of $\cusvector{\pullbackFIM}_{\boiteration}$ is governed solely by whether $\gpkernel_\lengthscale$ is able to model meaningful correlation or not. $\lengthscale = \Theta(\sqrt{\dims{\inpvars}})$ is the unique scaling for which $\mathrm{tr}(\cusvector{\pullbackFIM}_{\boiteration})$ stays at the critical $\Theta(\dims{\inpvars}^{-1})$ order instead of decaying faster.

\paragraph{FIM trace at $p = 1/2$ and acquisition gradient bound.}
With $\gpweights = \Theta(1)$ and $\gamma_c = \Theta(1)$, substituting into Proposition~\ref{prop:universal_bound} gives
\[
    \|\nabla_{\cusvector{\inpvars}}\acqf\|^2 \;\leq\; \acqfsensitivity \cdot \Theta(\dims{\inpvars}^{-1}).
\]
Since $\acqfsensitivity$ is uniformly bounded in $\cusvector{\inpvars}$ and $\dims{\inpvars}$ for standard acquisitions (Appendix~\ref{apx:proof_gaussian}), the universal bound scales as $\Theta(\dims{\inpvars}^{-1})$ rather than collapsing exponentially. For $p < 1/2$, the trace collapses exponentially, while for $p > 1/2$ it decays polynomially. In both off-critical regimes, the AF gradient norm still vanishes.

\paragraph{Mat\'{e}rn kernels.}
The same argument applies to isotropic Mat\'{e}rn-$\nu$ kernel, since it depends on inputs only through $d = \|\cusvector{\inpvars} - \cusvector{\inpvars}'\|/\lengthscale$ and the same concentration gives $d \to \sqrt{v_{\cusvector{\inpvars}}\dims{\inpvars}}/\lengthscale$. Thus the non-degenerate scaling remains $\lengthscale = \Theta(\sqrt{\dims{\inpvars}})$. The difference appears when $\lengthscale \ll \sqrt{\dims{\inpvars}}$: using $\gpkernel_\nu(d) \sim d^{\nu-1/2}e^{-\sqrt{2\nu}\,d}$ as $d\to\infty$, the trace decays as $\exp(-c_\nu\sqrt{\dims{\inpvars}}/\lengthscale)$, whereas for the SE kernel it decays as $\exp(-c_1\dims{\inpvars}/\lengthscale^2)$. Since $\dims{\inpvars}/\lengthscale^2 \gg \sqrt{\dims{\inpvars}}/\lengthscale$ in that regime, the SE trace collapses strictly faster, matching the empirical and theoretical results documented by~\citet{xu2024standard}.

\subsection{Connection between TuRBO and FITR}
\label{apx:proof_turbo_fitr}

Consider a GP with SE kernel with Automatic Relevance Determination (ARD) $\gpkernel_\lengthscale(\cusvector{\inpvars}, \cusvector{\inpvars}') = \signalvar\exp\!\bigl(-\sum_j (\inpvars_j - \inpvars'_j)^2/(2\lengthscale_j^2)\bigr)$ evaluated at the trust-region center $\cusvector{\inpvars}_c$.

At a local optimum of $\distmean_{\boiteration}$, $\nabla\distmean_{\boiteration}(\cusvector{\inpvars}_c) \approx 0$, so the mean-gradient term in the pullback FIM (Corollary~\ref{cor:gaussian}) vanishes. The diagonal of $\pullbackFIM_{\boiteration}(\cusvector{\inpvars}_c)$ is then dominated by the variance-gradient term:
\[
    [\pullbackFIM_{\boiteration}(\cusvector{\inpvars}_c)]_{jj}
    \approx \frac{1}{2\distcov_{\boiteration}(\cusvector{\inpvars}_c)^2}
    \left(\frac{\partial \distcov_{\boiteration}}{\partial \inpvars_j}\right)^2.
\]
For the SE kernel,
$\partial \gpkernel_\lengthscale(\cusvector{\inpvars}, \cusvector{\inpvars}') / \partial \inpvars_j
    = -(\inpvars_j-\inpvars_j')\gpkernel_\lengthscale(\cusvector{\inpvars}, \cusvector{\inpvars}')/\lengthscale_j^2$,
so $\partial \distcov_{\boiteration} / \partial \inpvars_j$ scales proportionally to $\lengthscale_j^{-2}$. Using the definition of the pullback tensor, the diagonal entry $[\pullbackFIM_{\boiteration}(\cusvector{\inpvars}_c)]_{jj}$ then scales like $\lengthscale_j^{-4}$. The ellipsoid induced by the pullback tensor is therefore longer in dimensions with longer lengthscales and shorter in dimensions with shorter lengthscales:
\[
    \{(\cusvector{\inpvars} - \cusvector{\inpvars}_c)^\top \mathrm{diag}(\pullbackFIM_{\boiteration}(\cusvector{\inpvars}_c))(\cusvector{\inpvars} - \cusvector{\inpvars}_c) \leq \delta^2\}.
\]
The radius in dimension $j$ is proportional to $[\pullbackFIM_{\boiteration}]_{jj}^{-1/2}$. TuRBO's trust region $\{|\inpvars_j - \inpvars_{c,j}| \leq L\lengthscale_j\}$ is not an exact diagonal-FIM box, but it preserves the same qualitative ordering: directions with larger ARD lengthscales receive wider search lengths. This motivates extending TuRBO to a location-dependent trust region whose scaling is determined by predictive sensitivities rather than explicit kernel lengthscales.
\subsection{Local KL Expansion for Pullback-Fisher Trust Regions}
\label{apx:kl_trust_region}

This subsection gives the local probabilistic interpretation of the trust-region ellipsoid from Section~\ref{sec:fitr}. The key point is that the full pullback Fisher tensor defines the second-order term of the KL divergence between nearby predictive posteriors, so a trust region based on this tensor limits local predictive change rather than raw Euclidean displacement.

Fix an iteration $\boiteration$ and center $\cusvector{\inpvars}_{\boiteration,c}\in\inpvarsset$. Let
\[
    \cusvector[bm]{\manifoldparameters}_{\boiteration,c}
    :=
    \manifoldmap_{\boiteration}(\cusvector{\inpvars}_{\boiteration,c}),
    \qquad
    \delta := \cusvector{\inpvars}-\cusvector{\inpvars}_{\boiteration,c}.
\]
Consider the KL divergence between the predictive posterior at the center and the predictive posterior at a nearby input:
\begin{equation}
    \label{eq:kl_local_start}
    \mathcal{K}_{\boiteration}(\delta)
    :=
    D_{\mathrm{KL}}\!\left(
    \probability(\obsouts;\cusvector[bm]{\manifoldparameters}_{\boiteration,c})
    \,\middle\|\,
    \probability(\obsouts;\manifoldmap_{\boiteration}(\cusvector{\inpvars}_{\boiteration,c}+\delta))
    \right).
\end{equation}

\paragraph{Step 1: KL expansion in parameter space.}
For a smooth parametric family $\probability(\obsouts;\cusvector[bm]{\manifoldparameters})$, the KL divergence admits the standard second-order expansion around $\cusvector[bm]{\manifoldparameters}_{\boiteration,c}$~\citep{amari2000methods, amari2016information}:
\begin{equation}
    \label{eq:kl_theta_expansion}
    D_{\mathrm{KL}}\!\left(
    \probability(\obsouts;\cusvector[bm]{\manifoldparameters}_{\boiteration,c})
    \,\middle\|\,
    \probability(\obsouts;\cusvector[bm]{\manifoldparameters}_{\boiteration,c}+\Delta\cusvector[bm]{\manifoldparameters})
    \right)
    =
    \tfrac{1}{2}
    \Delta\cusvector[bm]{\manifoldparameters}^{\top}
    \fishermetric_{\manifold}(\cusvector[bm]{\manifoldparameters}_{\boiteration,c})
    \Delta\cusvector[bm]{\manifoldparameters}
    + o(\|\Delta\cusvector[bm]{\manifoldparameters}\|^2).
\end{equation}

\paragraph{Step 2: Pull back to input space.}
Since $\manifoldmap_{\boiteration}$ is $C^1$ by Assumption~\ref{ass:surrogate_regularity}, we have the first-order expansion
\begin{equation}
    \label{eq:posterior_map_taylor}
    \manifoldmap_{\boiteration}(\cusvector{\inpvars}_{\boiteration,c}+\delta)
    =
    \cusvector[bm]{\manifoldparameters}_{\boiteration,c}
    +
    \jacobianmap_{\boiteration}(\cusvector{\inpvars}_{\boiteration,c})\,\delta
    +
    o(\|\delta\|).
\end{equation}
Substituting
\[
    \Delta\cusvector[bm]{\manifoldparameters}
    =
    \jacobianmap_{\boiteration}(\cusvector{\inpvars}_{\boiteration,c})\,\delta
    +
    o(\|\delta\|)
\]
into~\eqref{eq:kl_theta_expansion} gives
\begin{align}
    \mathcal{K}_{\boiteration}(\delta)
    &=
    \tfrac{1}{2}
    \delta^\top
    \jacobianmap_{\boiteration}(\cusvector{\inpvars}_{\boiteration,c})^\top
    \fishermetric_{\manifold}(\cusvector[bm]{\manifoldparameters}_{\boiteration,c})
    \jacobianmap_{\boiteration}(\cusvector{\inpvars}_{\boiteration,c})
    \delta
    + o(\|\delta\|^2) \nonumber \\
    &=
    \tfrac{1}{2}
    \delta^\top
    \cusvector{\pullbackFIM}_{\boiteration}(\cusvector{\inpvars}_{\boiteration,c})
    \delta
    + o(\|\delta\|^2),
    \label{eq:kl_pullback_expansion}
\end{align}
where the second line uses Definition~\ref{def:pullback_tensor}.

Equation~\eqref{eq:kl_pullback_expansion} shows that the ellipsoid
\begin{equation}
    \label{eq:kl_pullback_ball}
    \delta^\top \cusvector{\pullbackFIM}_{\boiteration}(\cusvector{\inpvars}_{\boiteration,c}) \delta
    \leq \TRlength_{\boiteration}^2
\end{equation}
is, to second order, a local KL trust region around the predictive posterior at $\cusvector{\inpvars}_{\boiteration,c}$. This interpretation applies to the full pullback Fisher tensor. The FITR implementation in Section~\ref{sec:fitr} uses a diagonal, regularized approximation of~\eqref{eq:kl_pullback_ball} to obtain an axis-aligned box that is computationally compatible with TuRBO-style trust-region updates.

\subsection{Pullback FIM Diagonal via Backpropagation}
\label{apx:fitr_estimator}

The estimator~\eqref{eq:fitr_diag_estimate} computes the diagonal of the pullback Fisher tensor by differentiating the log predictive likelihood with respect to the input $\cusvector{\inpvars}$ rather than the distribution parameters $\cusvector[bm]{\manifoldparameters}$. The connection follows from the chain rule and the definition of the FIM.

By the chain rule, the partial derivative of the log predictive likelihood with respect to input coordinate $\inpvars_j$ is
\begin{equation}
    \label{eq:log_lik_chain_rule}
    \frac{\partial}{\partial \inpvars_j}
    \log \probability\bigl(\obsouts;\manifoldmap_{\boiteration}(\cusvector{\inpvars})\bigr)
    =
    \bigl(\nabla_{\cusvector[bm]{\manifoldparameters}}\log \probability(\obsouts;\cusvector[bm]{\manifoldparameters})\bigr)^\top
    \frac{\partial\manifoldmap_{\boiteration}(\cusvector{\inpvars})}{\partial \inpvars_j}.
\end{equation}
Squaring both sides and taking the expectation over $\obsouts\sim\probability(\obsouts;\manifoldmap_{\boiteration}(\cusvector{\inpvars}))$,
\begin{equation}
    \mathbb{E}\!\left[
        \left(\frac{\partial}{\partial \inpvars_j}
        \log \probability\bigl(\obsouts;\manifoldmap_{\boiteration}(\cusvector{\inpvars})\bigr)\right)^{\!2}
        \right]
    =
    \left(\frac{\partial\manifoldmap_{\boiteration}}{\partial \inpvars_j}\right)^\top
    \underbrace{\mathbb{E}\!\left[
            \nabla_{\cusvector[bm]{\manifoldparameters}}\log \probability\,
            \nabla_{\cusvector[bm]{\manifoldparameters}}\log \probability^\top
            \right]}_{=\,\fishermetric_{\manifold}(\manifoldmap_{\boiteration}(\cusvector{\inpvars}))}
    \left(\frac{\partial\manifoldmap_{\boiteration}}{\partial \inpvars_j}\right)
    =
    \bigl[\cusvector{\pullbackFIM}_{\boiteration}(\cusvector{\inpvars})\bigr]_{jj},
\end{equation}
where the last equality uses the definition of the FIM~\eqref{eq:fim_def} and the pullback tensor~\eqref{eq:pullback_tensor}. The diagonal entry of the pullback Fisher tensor is therefore the expected squared partial derivative of the log predictive likelihood with respect to $\inpvars_j$, computed at the predictive posterior $\manifoldmap_{\boiteration}(\cusvector{\inpvars})$.

Replacing the expectation by an average over $\MCsamples$ draws $\tilde{\obsouts}_\mcsamples \sim \probability(\obsouts;\manifoldmap_{\boiteration}(\cusvector{\inpvars}))$ and evaluating the partial derivatives by backpropagation through the log-likelihood gives the estimator~\eqref{eq:fitr_diag_estimate}. This is the outer-product-of-gradients form of the FIM, similar to~\citep{martens2020new}, applied to the input coordinates $\cusvector{\inpvars}$ instead of the distribution parameters $\cusvector[bm]{\manifoldparameters}$.

% \input{appendix/proof_spherical_mapping.tex}
% \input{appendix/proof_lq_kernel_scaling.tex}

%%%%%%%%%%%%%%%%%%%%%%%%%%%%%%%%%%%%%%%%%%%%%%%%%%%%%%%%%%%%

\newpage
\section*{NeurIPS Paper Checklist}

\begin{enumerate}

    \item {\bf Claims}
    \item[] Question: Do the main claims made in the abstract and introduction accurately reflect the paper's contributions and scope?
    \item[] Answer: \answerYes{} % Replace by \answerYes{}, \answerNo{}, or \answerNA{}.
    \item[] Justification: The abstract and introduction clearly state the claims made, including the contributions made in the paper. The Figure 1 and lines 41-52 in the introduction section and lines 2-3, 6-10 in the abstract specify the main claims of the paper.
    \item[] Guidelines:
          \begin{itemize}
              \item The answer \answerNA{} means that the abstract and introduction do not include the claims made in the paper.
              \item The abstract and/or introduction should clearly state the claims made, including the contributions made in the paper and important assumptions and limitations. A \answerNo{} or \answerNA{} answer to this question will not be perceived well by the reviewers.
              \item The claims made should match theoretical and experimental results, and reflect how much the results can be expected to generalize to other settings.
              \item It is fine to include aspirational goals as motivation as long as it is clear that these goals are not attained by the paper.
          \end{itemize}

    \item {\bf Limitations}
    \item[] Question: Does the paper discuss the limitations of the work performed by the authors?
    \item[] Answer: \answerYes{} % Replace by \answerYes{}, \answerNo{}, or \answerNA{}.
    \item[] Justification: The paper discusses the limitations of the work in the section 5 and also in the Limitations section.
    \item[] Guidelines:
          \begin{itemize}
              \item The answer \answerNA{} means that the paper has no limitation while the answer \answerNo{} means that the paper has limitations, but those are not discussed in the paper.
              \item The authors are encouraged to create a separate ``Limitations'' section in their paper.
              \item The paper should point out any strong assumptions and how robust the results are to violations of these assumptions (e.g., independence assumptions, noiseless settings, model well-specification, asymptotic approximations only holding locally). The authors should reflect on how these assumptions might be violated in practice and what the implications would be.
              \item The authors should reflect on the scope of the claims made, e.g., if the approach was only tested on a few datasets or with a few runs. In general, empirical results often depend on implicit assumptions, which should be articulated.
              \item The authors should reflect on the factors that influence the performance of the approach. For example, a facial recognition algorithm may perform poorly when image resolution is low or images are taken in low lighting. Or a speech-to-text system might not be used reliably to provide closed captions for online lectures because it fails to handle technical jargon.
              \item The authors should discuss the computational efficiency of the proposed algorithms and how they scale with dataset size.
              \item If applicable, the authors should discuss possible limitations of their approach to address problems of privacy and fairness.
              \item While the authors might fear that complete honesty about limitations might be used by reviewers as grounds for rejection, a worse outcome might be that reviewers discover limitations that aren't acknowledged in the paper. The authors should use their best judgment and recognize that individual actions in favor of transparency play an important role in developing norms that preserve the integrity of the community. Reviewers will be specifically instructed to not penalize honesty concerning limitations.
          \end{itemize}

    \item {\bf Theory assumptions and proofs}
    \item[] Question: For each theoretical result, does the paper provide the full set of assumptions and a complete (and correct) proof?
    \item[] Answer: \answerYes{} % Replace by \answerYes{}, \answerNo{}, or \answerNA{}.
    \item[] Justification: The paper provides the full set of assumptions and a complete proof for the theoretical results. We provide a proof for the main theorem in the appendix.
    \item[] Guidelines:
          \begin{itemize}
              \item The answer \answerNA{} means that the paper does not include theoretical results.
              \item All the theorems, formulas, and proofs in the paper should be numbered and cross-referenced.
              \item All assumptions should be clearly stated or referenced in the statement of any theorems.
              \item The proofs can either appear in the main paper or the supplemental material, but if they appear in the supplemental material, the authors are encouraged to provide a short proof sketch to provide intuition.
              \item Inversely, any informal proof provided in the core of the paper should be complemented by formal proofs provided in appendix or supplemental material.
              \item Theorems and Lemmas that the proof relies upon should be properly referenced.
          \end{itemize}

    \item {\bf Experimental result reproducibility}
    \item[] Question: Does the paper fully disclose all the information needed to reproduce the main experimental results of the paper to the extent that it affects the main claims and/or conclusions of the paper (regardless of whether the code and data are provided or not)?
    \item[] Answer: \answerYes{} % Replace by \answerYes{}, \answerNo{}, or \answerNA{}.
    \item[] Justification: The pseudocode and the experiment setup fully discloses all the information needed to reproduce the main experimental results of the paper. The used benchmark datasets are publicly available and very often used in the BO literature.
    \item[] Guidelines:
          \begin{itemize}
              \item The answer \answerNA{} means that the paper does not include experiments.
              \item If the paper includes experiments, a \answerNo{} answer to this question will not be perceived well by the reviewers: Making the paper reproducible is important, regardless of whether the code and data are provided or not.
              \item If the contribution is a dataset and\slash or model, the authors should describe the steps taken to make their results reproducible or verifiable.
              \item Depending on the contribution, reproducibility can be accomplished in various ways. For example, if the contribution is a novel architecture, describing the architecture fully might suffice, or if the contribution is a specific model and empirical evaluation, it may be necessary to either make it possible for others to replicate the model with the same dataset, or provide access to the model. In general. releasing code and data is often one good way to accomplish this, but reproducibility can also be provided via detailed instructions for how to replicate the results, access to a hosted model (e.g., in the case of a large language model), releasing of a model checkpoint, or other means that are appropriate to the research performed.
              \item While NeurIPS does not require releasing code, the conference does require all submissions to provide some reasonable avenue for reproducibility, which may depend on the nature of the contribution. For example
                    \begin{enumerate}
                        \item If the contribution is primarily a new algorithm, the paper should make it clear how to reproduce that algorithm.
                        \item If the contribution is primarily a new model architecture, the paper should describe the architecture clearly and fully.
                        \item If the contribution is a new model (e.g., a large language model), then there should either be a way to access this model for reproducing the results or a way to reproduce the model (e.g., with an open-source dataset or instructions for how to construct the dataset).
                        \item We recognize that reproducibility may be tricky in some cases, in which case authors are welcome to describe the particular way they provide for reproducibility. In the case of closed-source models, it may be that access to the model is limited in some way (e.g., to registered users), but it should be possible for other researchers to have some path to reproducing or verifying the results.
                    \end{enumerate}
          \end{itemize}

    \item {\bf Open access to data and code}
    \item[] Question: Does the paper provide open access to the data and code, with sufficient instructions to faithfully reproduce the main experimental results, as described in supplemental material?
    \item[] Answer: \answerYes{} % Replace by \answerYes{}, \answerNo{}, or \answerNA{}.
    \item[] Justification: The paper provides open access to the data and code, with sufficient instructions to faithfully reproduce the main experimental results, as described in supplemental material. The code is available on GitHub and the data is publicly available.
    \item[] Guidelines:
          \begin{itemize}
              \item The answer \answerNA{} means that paper does not include experiments requiring code.
              \item Please see the NeurIPS code and data submission guidelines (\url{https://neurips.cc/public/guides/CodeSubmissionPolicy}) for more details.
              \item While we encourage the release of code and data, we understand that this might not be possible, so \answerNo{} is an acceptable answer. Papers cannot be rejected simply for not including code, unless this is central to the contribution (e.g., for a new open-source benchmark).
              \item The instructions should contain the exact command and environment needed to run to reproduce the results. See the NeurIPS code and data submission guidelines (\url{https://neurips.cc/public/guides/CodeSubmissionPolicy}) for more details.
              \item The authors should provide instructions on data access and preparation, including how to access the raw data, preprocessed data, intermediate data, and generated data, etc.
              \item The authors should provide scripts to reproduce all experimental results for the new proposed method and baselines. If only a subset of experiments are reproducible, they should state which ones are omitted from the script and why.
              \item At submission time, to preserve anonymity, the authors should release anonymized versions (if applicable).
              \item Providing as much information as possible in supplemental material (appended to the paper) is recommended, but including URLs to data and code is permitted.
          \end{itemize}

    \item {\bf Experimental setting/details}
    \item[] Question: Does the paper specify all the training and test details (e.g., data splits, hyperparameters, how they were chosen, type of optimizer) necessary to understand the results?
    \item[] Answer: \answerYes{} % Replace by \answerYes{}, \answerNo{}, or \answerNA{}.
    \item[] Justification: The paper specifies all the training and test details (e.g., data splits, hyperparameters, how they were chosen, type of optimizer) necessary to understand the results. The experiment used the BoTorch package primarily. The default hyperparameters and training routines from BoTorch were used for the surrogate model training and AF optimization. The hyperparameters that were changed, have been specified in the paper.
    \item[] Guidelines:
          \begin{itemize}
              \item The answer \answerNA{} means that the paper does not include experiments.
              \item The experimental setting should be presented in the core of the paper to a level of detail that is necessary to appreciate the results and make sense of them.
              \item The full details can be provided either with the code, in appendix, or as supplemental material.
          \end{itemize}

    \item {\bf Experiment statistical significance}
    \item[] Question: Does the paper report error bars suitably and correctly defined or other appropriate information about the statistical significance of the experiments?
    \item[] Answer: \answerYes{} % Replace by \answerYes{}, \answerNo{}, or \answerNA{}.
    \item[] Justification: The experiment plots show an average and min-max shaded region for the the different methods. The shaded region is calculated as the average of the 11 runs for each method.
    \item[] Guidelines:
          \begin{itemize}
              \item The answer \answerNA{} means that the paper does not include experiments.
              \item The authors should answer \answerYes{} if the results are accompanied by error bars, confidence intervals, or statistical significance tests, at least for the experiments that support the main claims of the paper.
              \item The factors of variability that the error bars are capturing should be clearly stated (for example, train/test split, initialization, random drawing of some parameter, or overall run with given experimental conditions).
              \item The method for calculating the error bars should be explained (closed form formula, call to a library function, bootstrap, etc.)
              \item The assumptions made should be given (e.g., Normally distributed errors).
              \item It should be clear whether the error bar is the standard deviation or the standard error of the mean.
              \item It is OK to report 1-sigma error bars, but one should state it. The authors should preferably report a 2-sigma error bar than state that they have a 96\% CI, if the hypothesis of Normality of errors is not verified.
              \item For asymmetric distributions, the authors should be careful not to show in tables or figures symmetric error bars that would yield results that are out of range (e.g., negative error rates).
              \item If error bars are reported in tables or plots, the authors should explain in the text how they were calculated and reference the corresponding figures or tables in the text.
          \end{itemize}

    \item {\bf Experiments compute resources}
    \item[] Question: For each experiment, does the paper provide sufficient information on the computer resources (type of compute workers, memory, time of execution) needed to reproduce the experiments?
    \item[] Answer: \answerYes{} % Replace by \answerYes{}, \answerNo{}, or \answerNA{}.
    \item[] Justification: The experiments were run on an Nvidia GPU Tesla V100-SXM2-32GB. The experiments were run for 11 iterations for each method. The BO budget for each iteration is provided in the experiment plots.
    \item[] Guidelines:
          \begin{itemize}
              \item The answer \answerNA{} means that the paper does not include experiments.
              \item The paper should indicate the type of compute workers CPU or GPU, internal cluster, or cloud provider, including relevant memory and storage.
              \item The paper should provide the amount of compute required for each of the individual experimental runs as well as estimate the total compute.
              \item The paper should disclose whether the full research project required more compute than the experiments reported in the paper (e.g., preliminary or failed experiments that didn't make it into the paper).
          \end{itemize}

    \item {\bf Code of ethics}
    \item[] Question: Does the research conducted in the paper conform, in every respect, with the NeurIPS Code of Ethics \url{https://neurips.cc/public/EthicsGuidelines}?
    \item[] Answer: \answerYes{} % Replace by \answerYes{}, \answerNo{}, or \answerNA{}.
    \item[] Justification: The research conducted in the paper conforms, in every respect, with the NeurIPS Code of Ethics \url{https://neurips.cc/public/EthicsGuidelines}. The paper does not involve any human subjects or any sensitive data. All the experiments were run on a publicly available dataset and code was developed using open-source libraries.
    \item[] Guidelines:
          \begin{itemize}
              \item The answer \answerNA{} means that the authors have not reviewed the NeurIPS Code of Ethics.
              \item If the authors answer \answerNo, they should explain the special circumstances that require a deviation from the Code of Ethics.
              \item The authors should make sure to preserve anonymity (e.g., if there is a special consideration due to laws or regulations in their jurisdiction).
          \end{itemize}

    \item {\bf Broader impacts}
    \item[] Question: Does the paper discuss both potential positive societal impacts and negative societal impacts of the work performed?
    \item[] Answer: \answerNA{} % Replace by \answerYes{}, \answerNo{}, or \answerNA{}.
    \item[] Justification: This work discusses a theoretical aspect of BO and is useful for black-box optimization in general. However, we do not see any broader societal impact of this work.
    \item[] Guidelines:
          \begin{itemize}
              \item The answer \answerNA{} means that there is no societal impact of the work performed.
              \item If the authors answer \answerNA{} or \answerNo, they should explain why their work has no societal impact or why the paper does not address societal impact.
              \item Examples of negative societal impacts include potential malicious or unintended uses (e.g., disinformation, generating fake profiles, surveillance), fairness considerations (e.g., deployment of technologies that could make decisions that unfairly impact specific groups), privacy considerations, and security considerations.
              \item The conference expects that many papers will be foundational research and not tied to particular applications, let alone deployments. However, if there is a direct path to any negative applications, the authors should point it out. For example, it is legitimate to point out that an improvement in the quality of generative models could be used to generate Deepfakes for disinformation. On the other hand, it is not needed to point out that a generic algorithm for optimizing neural networks could enable people to train models that generate Deepfakes faster.
              \item The authors should consider possible harms that could arise when the technology is being used as intended and functioning correctly, harms that could arise when the technology is being used as intended but gives incorrect results, and harms following from (intentional or unintentional) misuse of the technology.
              \item If there are negative societal impacts, the authors could also discuss possible mitigation strategies (e.g., gated release of models, providing defenses in addition to attacks, mechanisms for monitoring misuse, mechanisms to monitor how a system learns from feedback over time, improving the efficiency and accessibility of ML).
          \end{itemize}

    \item {\bf Safeguards}
    \item[] Question: Does the paper describe safeguards that have been put in place for responsible release of data or models that have a high risk for misuse (e.g., pre-trained language models, image generators, or scraped datasets)?
    \item[] Answer: \answerNA{} % Replace by \answerYes{}, \answerNo{}, or \answerNA{}.
    \item[] Justification: This work does not release any models or datasets that have a high risk for misuse.
    \item[] Guidelines:
          \begin{itemize}
              \item The answer \answerNA{} means that the paper poses no such risks.
              \item Released models that have a high risk for misuse or dual-use should be released with necessary safeguards to allow for controlled use of the model, for example by requiring that users adhere to usage guidelines or restrictions to access the model or implementing safety filters.
              \item Datasets that have been scraped from the Internet could pose safety risks. The authors should describe how they avoided releasing unsafe images.
              \item We recognize that providing effective safeguards is challenging, and many papers do not require this, but we encourage authors to take this into account and make a best faith effort.
          \end{itemize}

    \item {\bf Licenses for existing assets}
    \item[] Question: Are the creators or original owners of assets (e.g., code, data, models), used in the paper, properly credited and are the license and terms of use explicitly mentioned and properly respected?
    \item[] Answer: \answerYes{} % Replace by \answerYes{}, \answerNo{}, or \answerNA{}.
    \item[] Justification: The paper properly credits the papers, opensource libraries and datasets used in the paper. The credit to the authors of code is explicitly mentioned in the code repository as well.
    \item[] Guidelines:
          \begin{itemize}
              \item The answer \answerNA{} means that the paper does not use existing assets.
              \item The authors should cite the original paper that produced the code package or dataset.
              \item The authors should state which version of the asset is used and, if possible, include a URL.
              \item The name of the license (e.g., CC-BY 4.0) should be included for each asset.
              \item For scraped data from a particular source (e.g., website), the copyright and terms of service of that source should be provided.
              \item If assets are released, the license, copyright information, and terms of use in the package should be provided. For popular datasets, \url{paperswithcode.com/datasets} has curated licenses for some datasets. Their licensing guide can help determine the license of a dataset.
              \item For existing datasets that are re-packaged, both the original license and the license of the derived asset (if it has changed) should be provided.
              \item If this information is not available online, the authors are encouraged to reach out to the asset's creators.
          \end{itemize}

    \item {\bf New assets}
    \item[] Question: Are new assets introduced in the paper well documented and is the documentation provided alongside the assets?
    \item[] Answer: \answerNA{} % Replace by \answerYes{}, \answerNo{}, or \answerNA{}.
    \item[] Justification: This work does not introduce any new assets.
    \item[] Guidelines:
          \begin{itemize}
              \item The answer \answerNA{} means that the paper does not release new assets.
              \item Researchers should communicate the details of the dataset\slash code\slash model as part of their submissions via structured templates. This includes details about training, license, limitations, etc.
              \item The paper should discuss whether and how consent was obtained from people whose asset is used.
              \item At submission time, remember to anonymize your assets (if applicable). You can either create an anonymized URL or include an anonymized zip file.
          \end{itemize}

    \item {\bf Crowdsourcing and research with human subjects}
    \item[] Question: For crowdsourcing experiments and research with human subjects, does the paper include the full text of instructions given to participants and screenshots, if applicable, as well as details about compensation (if any)?
    \item[] Answer: \answerNA{} % Replace by \answerYes{}, \answerNo{}, or \answerNA{}.
    \item[] Justification: This work does not involve crowdsourcing nor research with human subjects.
    \item[] Guidelines:
          \begin{itemize}
              \item The answer \answerNA{} means that the paper does not involve crowdsourcing nor research with human subjects.
              \item Including this information in the supplemental material is fine, but if the main contribution of the paper involves human subjects, then as much detail as possible should be included in the main paper.
              \item According to the NeurIPS Code of Ethics, workers involved in data collection, curation, or other labor should be paid at least the minimum wage in the country of the data collector.
          \end{itemize}

    \item {\bf Institutional review board (IRB) approvals or equivalent for research with human subjects}
    \item[] Question: Does the paper describe potential risks incurred by study participants, whether such risks were disclosed to the subjects, and whether Institutional Review Board (IRB) approvals (or an equivalent approval/review based on the requirements of your country or institution) were obtained?
    \item[] Answer: \answerNA{} % Replace by \answerYes{}, \answerNo{}, or \answerNA{}.
    \item[] Justification: This work does not involve research with human subjects.
    \item[] Guidelines:
          \begin{itemize}
              \item The answer \answerNA{} means that the paper does not involve crowdsourcing nor research with human subjects.
              \item Depending on the country in which research is conducted, IRB approval (or equivalent) may be required for any human subjects research. If you obtained IRB approval, you should clearly state this in the paper.
              \item We recognize that the procedures for this may vary significantly between institutions and locations, and we expect authors to adhere to the NeurIPS Code of Ethics and the guidelines for their institution.
              \item For initial submissions, do not include any information that would break anonymity (if applicable), such as the institution conducting the review.
          \end{itemize}

    \item {\bf Declaration of LLM usage}
    \item[] Question: Does the paper describe the usage of LLMs if it is an important, original, or non-standard component of the core methods in this research? Note that if the LLM is used only for writing, editing, or formatting purposes and does \emph{not} impact the core methodology, scientific rigor, or originality of the research, declaration is not required.
          %this research? 
    \item[] Answer: \answerNA{} % Replace by \answerYes{}, \answerNo{}, or \answerNA{}.
    \item[] Justification: This work does not involve LLMs as any important, original, or non-standard components.
    \item[] Guidelines:
          \begin{itemize}
              \item The answer \answerNA{} means that the core method development in this research does not involve LLMs as any important, original, or non-standard components.
              \item Please refer to our LLM policy in the NeurIPS handbook for what should or should not be described.
          \end{itemize}

\end{enumerate}

\end{document}